\documentclass{article}

\usepackage{microtype}
\usepackage{graphicx}
\usepackage{subcaption}
\usepackage{booktabs}
\usepackage{hyperref}
\usepackage{amsmath}
\usepackage{amssymb}
\usepackage{mathtools}
\usepackage{amsthm}

\theoremstyle{plain}
\newtheorem{theorem}{Theorem}[section]

\newtheorem{proposition}[theorem]{Proposition}

\theoremstyle{definition}

\theoremstyle{remark}

\usepackage{enumitem}
\newcommand{\ket}[1]{\left|#1\right\rangle}
\usepackage[most]{tcolorbox}
\usepackage{xcolor}

\newcounter{examplebox}

\definecolor{mycolor}{RGB}{20,0,145}

\newtcolorbox{mycolorbox}[1][]{
  colframe=mycolor,
  colback=mycolor!4!white,
  title=#1
}

\newtcbtheorem{SidebarDef}{Definition}{
    enhanced,
    frame hidden,                   % No box frame
    colback=gray!3!white,           % Almost transparent background
    overlay={                       % The "Sidebar" (Left Line)
        \draw[line width=1.5pt, black!70!white] (frame.north west) -- (frame.south west);
    },
    sharp corners,
    top=2pt, bottom=2pt, left=6pt, right=2pt,
    fonttitle=\bfseries\small\scshape, 
    coltitle=black,
    attach title to upper={\ :\ },  
    colbacktitle=gray!3!white,      
    description font=\mdseries\itshape
}{def}

\newtcbtheorem{SidebarProp}{Background}{
    enhanced,
    frame hidden,
    colback=blue!2!white,           % Very subtle blue tint
    overlay={
        \draw[line width=1.5pt, blue!40!black] (frame.north west) -- (frame.south west);
    },
    sharp corners,
    top=2pt, bottom=2pt, left=6pt, right=2pt,
    fonttitle=\bfseries\small\scshape,
    coltitle=blue!40!black,
    attach title to upper={\ :\ },
    colbacktitle=blue!2!white,
    description font=\mdseries\itshape
}{prop}

\usepackage[capitalize,noabbrev]{cleveref}
\crefalias{examplebox}{example}
\Crefname{examplebox}{Example}{Examples}

\usepackage{listings}
\usepackage[accepted]{icml2026}

\icmltitlerunning{Quantum Program Generation Prioritizes Validity}

\begin{document}

\twocolumn[
  \icmltitle{Position: Quantum Program Generation Must Prioritize Validity \texorpdfstring{\\}{} Over Probabilistic Scaling}

  % List of affiliations: The first argument should be a (short) identifier you
  % will use later to specify author affiliations Academic affiliations
  % should list Department, University, City, Region, Country Industry
  % affiliations should list Company, City, Region, Country

  % You can specify symbols, otherwise they are numbered in order. Ideally, you
  % should not use this facility. Affiliations will be numbered in order of
  % appearance and this is the preferred way.
  \icmlsetsymbol{equal}{*}

  \begin{icmlauthorlist}
    \icmlauthor{Junhao Song}{equal,doc}
    \icmlauthor{Yu Zhou}{equal,ese}
    \icmlauthor{William Knottenbelt}{doc}
    \icmlauthor{Yudong Cao}{zpta,bcg}
  \end{icmlauthorlist}

  \icmlaffiliation{doc}{Department of Computing, Imperial College London, London, United Kingdom}
  \icmlaffiliation{ese}{Department of Earth Science and Engineering, Imperial College London, London, United Kingdom}
  \icmlaffiliation{zpta}{Zapata Quantum, Boston, Massachusetts, United States of America}
  \icmlaffiliation{bcg}{BCG X AI Science Institute, Boston Consulting Group, Boston, United States of America}
  
  \icmlcorrespondingauthor{William Knottenbelt}{w.knottenbelt@imperial.ac.uk}
  \icmlcorrespondingauthor{Yudong Cao}{ycao@zapataquantum.com}
  
  \icmlkeywords{Quantum computing, generative models, formal verification, scaling laws, probabilistic methods}

  \vskip 0.3in
]

% this must go after the closing bracket ] following \twocolumn[ ...
% This command actually creates the footnote in the first column listing the
% affiliations and the copyright notice. The command takes one argument, which
% is text to display at the start of the footnote. The \icmlEqualContribution
% command is standard text for equal contribution. Remove it (just {}) if you
% do not need this facility.

% Use ONE of the following lines. DO NOT remove the command.
% If you have no special notice, KEEP empty braces:
\printAffiliationsAndNotice{\icmlEqualContribution}  % no special notice (required even if empty)
% Or, if applicable, use the standard equal contribution text:
% \printAffiliationsAndNotice{\icmlEqualContribution}

\begin{abstract}
The scaling hypothesis assumes that increasing model parameters yields emergent reasoning capabilities. This position paper argues that applying this probabilistic paradigm to generic quantum circuit synthesis is a directional error. Unlike natural languages, quantum circuits require strict adherence to mathematical constraints that manifest a significant syntax-semantics gap. Training on unverified quantum programs means that models learn syntax but fail to capture the physical semantics of the Hilbert space. Since the valid subset of circuit designs decays exponentially with the number of qubits, post-hoc filtering is mathematically intractable. We propose a pivot from human-centric copilots to verifier-centric agents. We integrate hierarchical constraints, topological masks, and symbolic proxies directly into generation. Our analysis suggests that scale alone cannot bridge the validity gap. Verification-aware architectures offer a viable path for modular quantum program generation. These considerations point toward generation methods that encode task-specific rules of quantum information, rather than relying on imitation alone.
\end{abstract}

%%%%%%%%%%%%%%%%%%%%%%%%%%%%%%%%%%%%%%%%%%%%%%%%%%%%%%%%%%%%%%%%%%%%%%%%%%%%%%%
%% MAIN CONTENT - Modular Sections
%%%%%%%%%%%%%%%%%%%%%%%%%%%%%%%%%%%%%%%%%%%%%%%%%%%%%%%%%%%%%%%%%%%%%%%%%%%%%%%

%%%%%%%%%%%%%%%%%%%%%%%%%%%%%%%%%%%%%%%%%%%%%%%%%%%%%%%%%%%%%%%%%%%%%%%%%%%%
%% Section 1: Introduction
%%%%%%%%%%%%%%%%%%%%%%%%%%%%%%%%%%%%%%%%%%%%%%%%%%%%%%%%%%%%%%%%%%%%%%%%%%%%

\section{Introduction}
\label{sec:intro}

Machine learning research has flourished often with the conflation of fluency with validity in sequence generation tasks. A syntactically imperfect function generated by large language models (LLMs) often retains semantic utility \cite{austin2021program}. Conversely, a syntactically correct statement may be semantically wrong, but may still be an informative data point for model training \cite{chen2021evaluating}. In classical software engineering, this conflation is forgiving. An off-by-one error in a Python script is a localized failure; it is often debuggable via simple runtime execution. For quantum computing, however, it is dangerous to inherit this ``copilot'' tradition. It blindly imports the assumption that partial correctness yields partial utility. In the quantum realm, a misplaced operation is not merely a bug. Depending on its position, it can invalidate the entire circuit design. A single misplaced gate does not just introduce a localized error. It can destroy the global interference pattern required for computation. Figure~\ref{fig:teaser} contrasts this naive closed-loop paradigm with a verifier-centric alternative, clarifying that the core failure lies not in the absence of verification, but in its post-hoc and non-constructive use.

\begin{figure}[htbp]
    \centering
    \includegraphics[width=\linewidth]{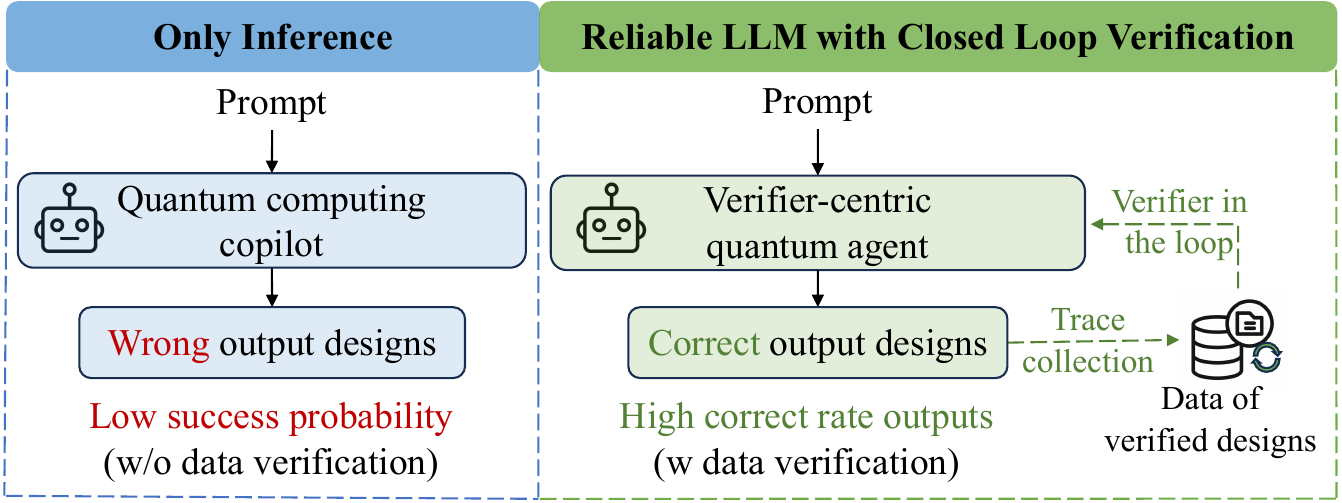}
    \caption{\textbf{Comparison of the traditional method and reliable LLM with closed loop verification.}
    When data verification is integrated into the training loop, only validated designs from the constrained validation space are used, ensuring higher correct rate outputs.}
    \label{fig:teaser}
\end{figure}

Recent systems like IBM's Qiskit Code Assistant \cite{dupuis2024qiskit} and DeepMind's AlphaTensor-Quantum \cite{ruiz2025quantum} show promise. However, a critical dissection reveals their success stems from domain-specific verification loops \cite{romeraparedes2024mathematical} or curated human-verified datasets \cite{vishwakarma2024qiskit}. It does not stem from raw parameter scaling. Outside these verified islands, the ``scaling hypothesis'' \cite{kaplan2020scaling} faces a hard mathematical wall. Standard LLMs maximize the likelihood of the next token. They do not maximize the fidelity of the resulting quantum state. There is a fundamental misalignment between the probabilistic objective of the model and the deterministic constraints of the Hilbert space.

The magnitude of this misalignment is often underestimated. The unitary group $\mathrm{SU}(2^n)$ is a continuous manifold of dimension $4^n - 1$ \citep{nielsen2010quantum}, whereas any polynomial-depth circuit ansatz spans only an $O(\mathrm{poly}(n))$-dimensional submanifold. As a consequence, the set of circuit designs implementing a given target unitary occupies an exponentially sparse subset of the space of all $n$-qubit circuits \cite{mele2024introduction}, a fact we formalize via Haar-measure concentration in Section \ref{subsec:gap}. For generic synthesis tasks, valid outcomes therefore constitute a vanishingly small fraction of the search space~\citep{bouland2018quantum}. In this regime, scaling computation does not bridge the entropy gap. It merely allows the model to hallucinate more convincingly. A larger model simply learns to generate more sophisticated noise.

% \vspace{-0.1em}
\begin{tcolorbox}[
    colback=blue!4!violet!3!white,
    colframe=blue!30!violet!50,
    boxrule=0.8pt,
    arc=2mm,
    left=8pt, right=8pt, top=8pt, bottom=8pt
]   
    %We characterize the application of probabilistic scaling to domains with hard constraints such as quantum computing as a category error. Generative models for such domains must stop simulating the syntax of scientific code and start internalizing the semantics of real-world physical laws with verified data.
    We argue that quantum program generation and other similar scientific domains with hard physical and mathematical constraints expose a fundamental syntax–semantics gap, where applying probabilistic scaling of large models can be entirely counterproductive. 
    The exponential sparsity of functionally correct outputs makes post-selection intractable, so reliability must come from a constructively verified system.
\end{tcolorbox}
% \vspace{-0.1em}

To transition from probabilistic mimicry to scientific discovery, we formalize three critical shifts:

\begin{enumerate}[leftmargin=*, noitemsep, topsep=-2pt]
    \item \textbf{Data poisoning (Section \ref{sec:data_crisis}):} Training on unverified open-source code introduces a structural bias against physical validity, or data poisoning in the broadest sense of a corrupted training signal. Public quantum code carries no correctness guarantee: parseable code need not implement a valid circuit. Models trained on this distribution learn syntax but fail to capture Hilbert space semantics. We hypothesize that this leads to \textbf{inverse scaling} \cite{mckenzie2023inverse}. Larger models more faithfully imitate the biased distribution, becoming more confident in physically invalid outputs.
    
    \item \textbf{The complexity trap (Section \ref{sec:complexity}):} We demonstrate that the sparsity of valid circuits renders post-hoc filtering computationally intractable for generic synthesis. Beyond approximately 40-50 qubits, classical verification becomes computationally intractable \cite{dalzell2020howmany,arute2019quantum}. This renders post-hoc filtering infeasible. The exponential decay of the valid subspace creates a coupled barrier that scaling model parameters alone cannot overcome.
    
    \item \textbf{Verifier-centric agents (Section \ref{sec:verifier}):} Human-in-the-loop verification is ill-suited for this domain. Human verification is cognitively impractical in the regime of a large number of qubits. We propose replacing human supervisors with formal solvers. We introduce constructive verification protocols. These integrate hierarchical constraints (topological masks, symbolic heuristics, and modular simulation) along with trace-based corpora to enable validity enforcement directly during generation.
\end{enumerate}

%%%%%%%%%%%%%%%%%%%%%%%%%%%%%%%%%%%%%%%%%%%%%%%%%%%%%%%%%%%%%%%%%%%%%%%%%%%%
%% Section 2: The Data Crisis
%%%%%%%%%%%%%%%%%%%%%%%%%%%%%%%%%%%%%%%%%%%%%%%%%%%%%%%%%%%%%%%%%%%%%%%%%%%%

\section{The Structural Limitations of Probabilistic Scaling}
\label{sec:data_crisis}

The prevailing scaling hypothesis implicitly assumes that increasing model parameters yields emergent reasoning capabilities \cite{kaplan2020scaling,hoffmann2022training}. While empirically validated in loosely constrained domains like natural language, this paradigm encounters fundamental difficulties in {quantum program generation}. We argue that applying probabilistic scaling to arbitrary quantum circuit design constitutes a structural mismatch that cannot be resolved by scaling alone.

% \begin{figure}[htbp]
%     \centering
%     \includegraphics[width=0.95\linewidth]{figs/The_Gap_between_Syntax_and_Semantics.pdf}
%     \caption{From syntactic fluency to physical invalidity: A category error in quantum program generation.}
%     \label{fig:category-error-quantum}
% \end{figure}

\subsection{The Syntax-semantics Gap}\label{subsec:gap}

In classical programming, the syntax–semantics gap is often narrow in practice because semantic correctness can be cheaply checked through execution, tests, debuggers and introspection, while syntax errors are frequently localized and repairable (Figure \ref{fig:fig2_comparison}). For instance, it has been shown that automated recovery can repair the majority of real-world syntactically invalid programs within seconds \cite{diekmann2020dontpanic}, which is evidence that many ``syntactically imperfect'' programs still preserve enough structural intent to be repaired into a meaningful artifact. We illustrate this fundamental difference with a detailed example in \Cref{ex:fallacy} of the Appendix.

Recent AI4Science results indicate that when outputs must satisfy hard mathematical constraints, LLMs behave primarily as high-recall proposal generators rather than reliable constrained designers: their outputs often appear plausible yet fail formal checks, and performance improves mainly when an external verifier or solver is integrated into the loop. In planning, it is shown that autoregressive LLMs are unreliable self-verifiers, motivating LLM-Modulo architectures where model-based verifiers enforce constraints and guide iterative repair \cite{kambhampati2024llms}. In formal mathematics, evaluations of Lean4 autoformalization reveal persistent failures on harder theorems \cite{gulati2024evaluation}, while APOLLO demonstrates that substantial gains arise from compiler-guided proof repair rather than raw generation \cite{ospanov2025apollo}. Together, these findings support the quantum-relevant claim that LLMs struggle in mathematically constrained domains unless generation is tightly coupled with explicit checking and repair.

A second, independent limitation arises in domains where the true optimization objective only becomes visible after crossing abstraction layers i.e., leaky abstraction. In such settings, local high-level proxies are unreliable because downstream compilation, synthesis, and optimization introduce nonlocal interactions and heuristic effects.
% LLMs are trained to propose optimization choices while explicitly modeling downstream instruction-count changes, reflecting that good decisions depend on post-compilation effects rather than surface structure . 
% In hardware design, Fang et al. demonstrate an LLM-based Register-Transfer Level (RTL) augmentation pipeline that relies on synthesis-in-the-loop correction to ensure downstream synthesizability and Power, Performance, and Area (PPA) relevance \cite{fang2024transferable}. 
% Relatedly, DeLorenzo et al. introduce Abstractions-of-Thought to mitigate errors induced by abstraction boundaries \cite{delorenzo2025abstraction}, while Veriopt shows that optimizing only for functional correctness neglects critical downstream PPA objectives \cite{tasnia2025veriopt}. 
Results in similarly structured domains such compiler optimization \cite{cummins2023llmcompileropt}, hardware design \cite{fang2024transferable} also support the claim that LLMs struggle with leaky-abstraction design problems. In the case of quantum compilation \cite{Cuccaro2004ripple, javadi2024quantum}, even among valid designs, locally ``better-looking'' structures may invert after transpilation, routing, or scheduling, necessitating multi-level evaluation rather than syntax-level heuristics. The model can be simultaneously (i) confidently fluent while violating hard physical/structural constraints, and (ii) confidently suboptimal because the meaning of a design decision only materializes after lower-level compilation, so without explicit verification and multi-level evaluation in the loop, MLE-trained fluency systematically diverges from both constraint satisfiability and compiled quality.

\begin{SidebarProp}{Syntax–semantics gap}{bg1}
In quantum programming, these ``syntax-first'' conveniences break down, and the syntax–semantics gap widens for two independent reasons:
\begin{itemize}[leftmargin=*, noitemsep, topsep=2pt]
    \item Semantic inspection is intrinsically constrained: common debugging practices such as inspecting intermediate states or step-by-step execution are obstructed because quantum states in general are hard to capture exactly with only classical resources, and on a quantum device, measurement collapses quantum states. In addition, the no-cloning theorem prevents faithful copying, making even the validation of a syntactically correct circuit non-trivial \cite{ramalho2024testing}. 
    \item Even among semantically correct programs, cost semantics leak across abstraction layers, rendering local design judgments unreliable. For example, the ripple-carry adder \cite{Cuccaro2004ripple} admits a clean modular description in terms of MAJ and UMA blocks (see Figure \ref{fig:cuccaro_adder}a), yet apparent module-level improvements (e.g., fewer gates) may invert after gate-level lowering and compiler rewrites such as cancellation, resynthesis, scheduling, and routing. 
    The true optimization objectives are often determined by cross-module interactions that are not apparent at the modular level \cite{javadi2024quantum}. 
    This is a clear case of abstraction leakage, which we discuss in detail in Section \ref{subsec:levels}.
\end{itemize}
\end{SidebarProp}

\begin{figure}[htbp]
    \centering
    \includegraphics[width=\linewidth]{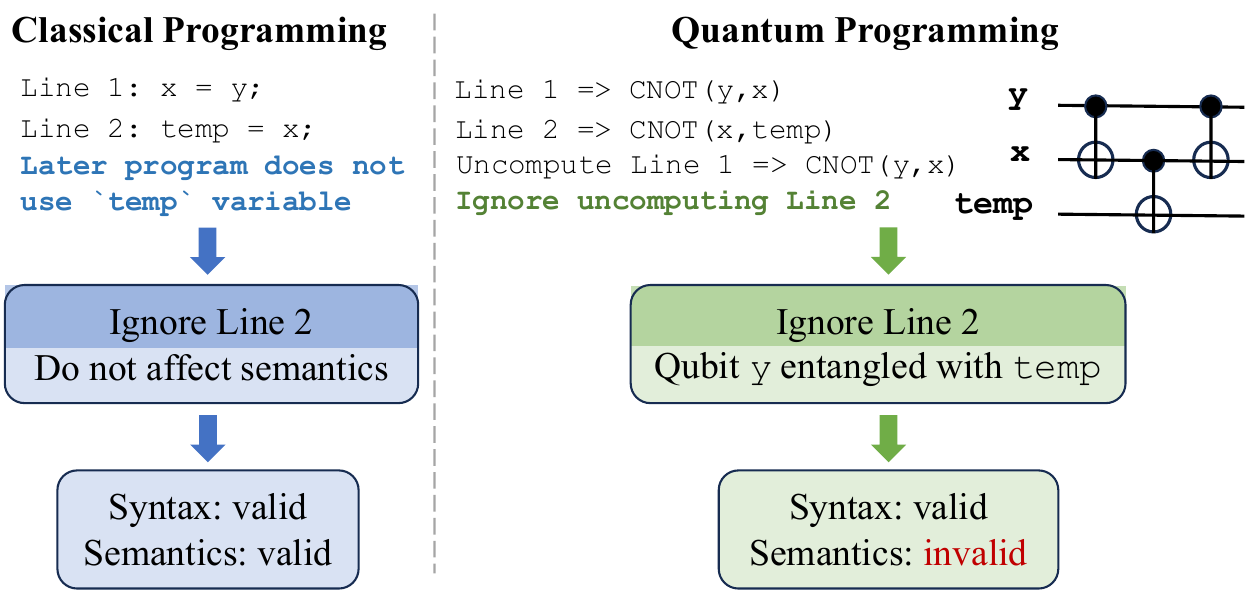}
    \caption{\textbf{Toy example contrasting classical and quantum programming:} syntactic fluency is much less of a predictor of semantic correctness in quantum programming than in classical programming.}
    \label{fig:fig2_comparison}
\end{figure}

\subsection{Structural Bias from Unverified Corpora}\label{subsec:stru}

Training on even lightly curated quantum code biases the learned distribution against physical validity, in a manner structurally analogous to {data poisoning} \cite{tsai2025beyond}. Indeed, practitioners building production quantum-code LLMs already recognize that uncurated public code is unsuitable as training data. In the Qiskit Code Assistant training pipeline \cite{dupuis2024qiskit}, code from official Qiskit GitHub organizations is oversampled by $10.3\times$ relative to general scraped repositories, a ratio chosen empirically for benchmark performance. We take such filtering as a starting point and consider whether the curation strategies currently in use can close the validity gap identified in Section \ref{subsec:gap}. 

Source-based heuristics for curating code data select code that are apparently well-tested, but not code that is guaranteed to implement the intended function. They work in classical settings for an indirect reason: classical code is constantly tested in practice, through unit tests, CI, crashes, and bug reports, so incorrect code gets caught and fixed over time, and high-provenance repositories end up being more trustworthy as a result. Quantum code lacks this tight feedback loop. Incorrect programs are not pruned from public corpora, so filtering by source does little to guarantee correctness, and training on the resulting data carries that gap into the model. Our earlier work \cite{song2026data} reports empirical evidence consistent with this hypothesis: LLMs trained without data verification ceil out at $79\%$ accuracy on quantum circuit optimization tasks, with no further gains from scale.

\subsection{The Phenomenon of Inverse Scaling}

We hypothesize that quantum program generation exhibits a form of inverse scaling \cite{mckenzie2023inverse}: a regime in which improvements in next-token loss do not translate into, and can actively diverge from, improvements in physical validity. Standard scaling laws predict that larger models achieve lower next-token loss on the training distribution \cite{kaplan2020scaling}. In our domain, however, the training distribution itself is biased away from the manifold of valid programs (Section \ref{subsec:stru}), so models that fit it more faithfully at scale fit a biased target more faithfully.

This adaptation manifests at both training and inference time. At training, under maximum likelihood on a biased corpus, the model's distribution $p_\theta$ converges toward the training distribution $p_\text{train}$ rather than toward the distribution $p_\text{valid}$ of physically valid circuits. Since $p_\text{train}$ is structurally misaligned with $p_\text{valid}$, the KL divergence $D_{KL}(p_\text{valid} || p_\theta)$ does not vanish with scale: probability mass concentrates increasingly tightly on syntactically plausible but physically invalid regions of the circuit space. At inference, larger models acquire stronger syntactic priors that are correspondingly harder to override. The Strong Prior modality \cite{mckenzie2023inverse}, which is also demonstrated in classical code \cite{miceli2023larger}, in which large models fail to follow in-context instructions that conflict with patterns learned in pretraining.

% \vspace{-0.2em}
\begin{tcolorbox}[
    colback=blue!4!violet!3!white,
    colframe=blue!30!violet!50,
    boxrule=0.8pt,
    arc=2mm,
    left=8pt, right=8pt, top=8pt, bottom=8pt
]
\textbf{\textsc{Position 1:}} We contend that the objective of generative models in quantum computing must shift from \textit{probabilistic emulation} of program code to \textit{constructive verification} against the mathematical structure of quantum algorithms and subroutines. Scale alone, without verification-aware constraints, cannot bridge the gap between syntax and semantics.
\end{tcolorbox}
% \vspace{-0.8em}

%%%%%%%%%%%%%%%%%%%%%%%%%%%%%%%%%%%%%%%%%%%%%%%%%%%%%%%%%%%%%%%%%%%%%%%%%%%%
%% Section 3: The Complexity
%%%%%%%%%%%%%%%%%%%%%%%%%%%%%%%%%%%%%%%%%%%%%%%%%%%%%%%%%%%%%%%%%%%%%%%%%%%%

\begin{SidebarProp}{Validity}{bg2}
In our context we are concerned with validity of a quantum program on two distinct logical layers:
\begin{itemize}[leftmargin=*, noitemsep, topsep=2pt]
    \item {Structural validity} ($\mathcal{M}_\text{struct}$): Respecting the constraints imposed by the underlying hardware abstraction. For Noisy Intermediate Scale Quantum (NISQ) devices, this reduces to a coupling graph check on physical gates, which is polynomial time in circuit size. For fault-tolerant quantum computing (FTQC) architectures, this expands to a hierarchy of constraints including patch-layout adjacency and boundary-type matching for lattice surgery \cite{litinski2019game,fowler2018low}, magic-state factory throughput and scheduling, code-distance budgeting, and Pauli-frame consistency, several of which are individually NP-hard as combinatorial problems.
    \item {Functional correctness} ($\mathcal{M}_\text{func}$): Implementing the target logical channel within tolerance $\epsilon$, namely the circuit's action on the algorithm's computational register matches some $U_\text{target}\in SU(2^n)$, where $n$ is the logical qubit count. This definition applies to both NISQ and FTQC regimes. Exact verification requires $\mathcal{O}(2^n)$ time in the worst case. Even approximate decision versions such as the non-identity testing (deciding whether a given unitary circuit is close, up to global phase, to the identity of quantum circuits), are known to be computationally hard \cite{watrous2009quantum}. Physical-level concerns specific to fault tolerance, such as decoder convergence and magic-state fidelity, are not in scope for this correctness check.
\end{itemize}
\end{SidebarProp}

\section{The Challenge of Post-selection}
\label{sec:complexity}

Proponents of scaling argue that hallucination can be mitigated through 
sample-and-filter pipelines. This ``Generate-then-Verify'' paradigm 
assumes one can generate $K$ candidates, discard invalid ones and 
recover validity in expectation as $K$ grows \cite{li2022competition}. 
While effective for domains with locally checkable constraints such as 
parser-enforceable syntax \cite{shin2021constrained} or unit-test 
filtering \cite{chen2021evaluating,li2022competition}, this strategy 
fails for generic quantum synthesis. The verification step can be itself 
computationally expensive, and only an exponentially small fraction of 
candidates passes. The sample-and-filter pipeline therefore inherits a 
coupled exponential cost that no amount of model scaling can amortize 
away.

Validity for quantum synthesis decomposes into two layers 
(Background 2): structural validity 
($\mathcal{M}_\text{struct}$), which captures whether a candidate 
respects the architectural rules of the target hardware abstraction, and 
functional correctness ($\mathcal{M}_\text{func}$), which captures 
whether the circuit's logical action matches the target unitary. The 
overall success rate of a sample-and-filter pipeline is the joint 
probability $P(\mathcal{M}_\text{struct})\,P(\mathcal{M}_\text{func} 
\mid \mathcal{M}_\text{struct})$, and the per-candidate verification 
cost is dominated by the cost of deciding membership in 
$\mathcal{M}_\text{func}$.

\subsection{Structural Validity: Cost of Checking and Sampling}\label{sec:div_high}

We analyze $\mathcal{M}_\text{struct}$ along two axes: the per-candidate cost 
of \emph{verifying} membership, and the probability that a candidate drawn 
from a model's distribution $\pi_\theta$ \emph{satisfies} it. The two axes 
scale very differently across regimes. Verification is uniformly cheap: 
each constraint defining $\mathcal{M}_\text{struct}$ is local in the 
candidate description $x$, so for a circuit of $n$ qubits at depth $d$ with 
$G = \Theta(nd)$ operations, membership can be decided in $\Theta(nd)$ time 
by per-operation table lookups, gate-set and coupling-graph adjacency in 
NISQ, and patch-layout, boundary-type, distance-budget, factory-throughput, 
and Pauli-frame consistency in FTQC \citep{litinski2019game,
beverland2022assessing}. The lower bound is immediate by the need to read 
each constrained operation at least once. We treat layout and schedule 
annotations as part of $x$: this scopes $\mathcal{M}_\text{struct}$ to a 
consistency check, separate from the upstream NP-hard problem of 
\emph{producing} such annotations from a logical circuit 
\citep{siraichi2018qubit, tan2024sat}.

Sampling, in contrast, exhibits a sharp regime gap. In NISQ, 
$\mathcal{M}_\text{struct}$ is the conjunction of independent local 
constraints (one per gate), so the pass rate factors as 
$P(x \in \mathcal{M}_\text{struct}^\text{NISQ}) = (1-\epsilon)^G$ in 
the per-gate violation probability $\epsilon$. Supervised fine-tuning 
on hardware-aware data can drive $\epsilon$ below $1/G$, yielding 
$\Theta(1)$ pass rates; this regime is empirically reached by 
production NISQ code-generation systems~\citep{dupuis2024qiskit, 
vishwakarma2024qiskit}. In FTQC, $\mathcal{M}_\text{struct}$ encodes 
\emph{global} combinatorial constraints such as lattice-surgery routes must not 
cross in spacetime, factory throughput must meet T-gate demand at every 
timestep, and distance budgets must sum to the target logical error rate 
\citep{litinski2019game, beverland2022assessing}. These constraints are not separable, and the corresponding feasibility 
problems inherit NP-hardness from the qubit-mapping subroutine 
\citep{siraichi2018qubit, botea2018complexity}, with further evidence of 
practical intractability from SAT-based lattice-surgery synthesis 
\citep{tan2024sat} and Steiner-tree formulations of multi-qubit surgery 
routing \citep{silva2024multi}. As we show in 
Appendix~\ref{app:struct-sampling-hardness}, this implies that no 
polynomial-time sampler $\pi_\theta$, including any LLM with 
$\mathrm{poly}(n)$ parameters and inference cost, can place more than 
$e^{-\Omega(n)}$ mass on $\mathcal{M}_\text{struct}^\text{FTQC}$ in the 
worst case, unless $\mathsf{NP} \subseteq \mathsf{BPP}$.

The implication for the post-selection pipeline is asymmetric. In NISQ, 
structural filtering is essentially free: a $\Theta(1)$ pass rate combined 
with $\Theta(nd)$ verification yields $\Theta(nd)$ expected cost per 
structurally valid candidate. In FTQC, the same pipeline incurs an 
\emph{additional} exponential factor before functional validity is even 
considered, giving expected cost $e^{\Omega(n)} \cdot \Theta(nd)$. This 
worst-case bound is, of course, achievable only on adversarial 
specifications; practical algorithms are built from regular primitives 
(QFT, modular arithmetic, amplitude estimation, Trotterization) whose 
structure admits polynomial-time schedule construction. The point of the 
hardness result is not that valid FTQC compilation is hopeless, but that 
generic probabilistic generation cannot reach it: reliability must come 
from algorithmic structure exploited constructively, not from scale. 
This motivates the verifier-centric paradigm of 
Section~\ref{sec:verifier}.

% \subsection{The Generalization Paradox of Supervised Fine Tuning}
% \label{sec:generalization-paradox}

\begin{figure}
    \centering
    \includegraphics[width=0.8\linewidth]{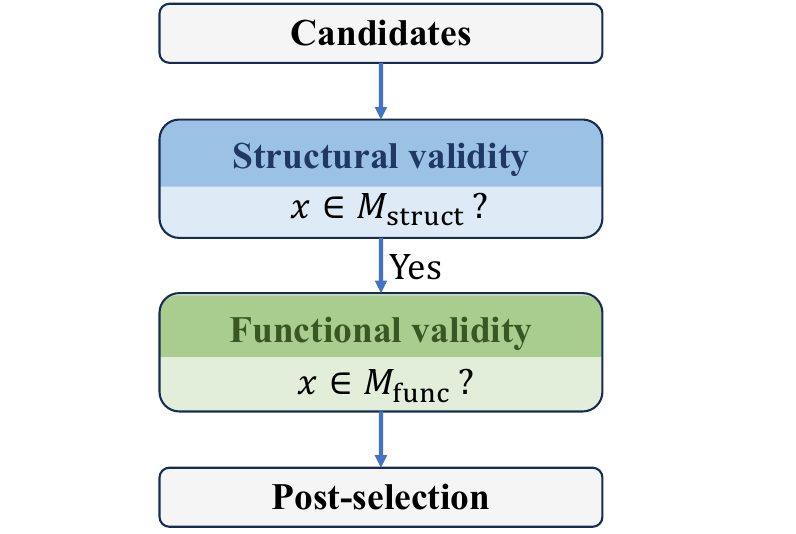}
    \caption{{Two-stage post hoc filtering for quantum synthesis.}}
    \vspace{-2em}
    \label{fig:two-stage_filtering}
\end{figure}

% A common defense involves fine-tuning models on millions of valid circuits that implement common quantum algorithmic primitives such as the quantum Fourier transform and Grover search. This reliance on known algorithms constitutes a \textbf{memorization fallacy}.

% Fine-tuning on structured algorithms allows the model to memorize specific distributions. A robust compiler must generalize to \textbf{out-of-distribution} targets, for instance arbitrary molecule simulations where no textbook decomposition exists. Since the model is trained via maximum likelihood estimation, the objective forces the distribution $p_\theta$ to cover the high-entropy support of the training corpus. When presented with a novel target $U_\text{new}$, the model lacks a gradient signal to concentrate probability on the unique correct circuit. Instead, it reverts to the maximum-entropy prior learned from diverse but irrelevant training examples.

\subsection{Functional Validity: Cost of Checking and Sampling}
\label{sec:func-validity}
The $\Theta(2^n)$ verification cost in Background 2, namely the cost of comparing a circuit's induced unitary $U(x)$ against 
$U_\text{target}$, is not improved by tensor-network methods: matrix 
product states \citep{vidal2003efficient} require bond dimension that grows 
exponentially for the volume-law-entangled circuits characteristic of 
post-classical algorithms \citep{schuch2008entropy}. We therefore work with 
$C_\text{func} = \Theta(2^n)$ for generic synthesis; 
Appendix~\ref{app:func-verification} discusses structured-target escape 
routes (Clifford fragments, ZX-calculus normalization).

Verification cost is only half the problem. The conditional pass rate 
$P(x \in \mathcal{M}_\text{func} \mid x \in \mathcal{M}_\text{struct})$ 
also decays exponentially in $n$, by an argument whose form depends on 
the program representation: Lipschitz volume counting on an $O(nd)$-dimensional 
parameter manifold inside $SU(2^n)$ for NISQ ansatzes 
\citep{nielsen2010quantum}, or direct enumeration over a combinatorial 
program space of size $e^{\Theta(nd \log n)}$ for FTQC instruction 
sequences. Both yield, under the non-pathology assumption that 
$\pi_\theta$ does not memorize the specific solution,
\begin{equation}
\label{eq:func-pass}
P_{x \sim \pi_\theta}\!\left(\|U(x) - U_\text{target}\|\!\leq\!\varepsilon 
\middle| x \in \mathcal{M}_\text{struct}\right) 
\!\leq\! C_\theta \cdot e^{-\gamma n},
\end{equation}
where $C_\theta$ measures how far $\pi_\theta$ deviates from the uniform 
reference distribution on its support, bounded by $\mathrm{poly}(n)$ 
unless the model has memorized the specific solution, and $\gamma$ is 
polynomial in $d$ and logarithmic in $n$ or $1/\varepsilon$ 
(proofs in Appendix~\ref{app:func-sparsity}).\footnote{Two scope 
conditions are worth stating explicitly. (a) The NISQ bound requires 
standard regularity of the parameterization (full-rank Jacobian, 
bounded fiber multiplicity); pathological ansatzes in the 
barren-plateau regime \citep{mcclean2018barren} violate this, but in 
that regime synthesis already fails for optimization-theoretic 
reasons independent of our bound. (b) The bound is pointwise in 
$U_\text{target}$ but informative only when $C_\theta = 
\mathcal{O}(\mathrm{poly}(n))$, which holds by definition of 
generalization for out-of-distribution targets. For in-distribution 
algorithmic primitives (textbook QFT, Grover, modular arithmetic), 
$C_\theta$ can be large enough to cancel $e^{-\gamma n}$ entirely, 
which is consistent with empirical success of code-completion 
systems on such benchmarks \citep{dupuis2024qiskit, 
vishwakarma2024qiskit}. The validity gap is therefore not about 
whether circuits can in principle implement $U_\text{target}$, but 
about whether a sampler can find one without already having seen the 
answer.} The implication for the post-selection pipeline of 
Section~\ref{sec:exponential-cost} is direct: functional verification 
costs $\Theta(2^n)$ per candidate while only an $e^{-\gamma n}$ fraction 
of candidates passes it, so the two exponentials compound.

\subsection{The Exponential Cost of Post-selection}\label{sec:exponential-cost}

Combining the structural and functional bounds of 
Sections~\ref{sec:div_high} and~\ref{sec:func-validity} gives the total 
expected cost of the post-hoc selection pipeline 
(Figure~\ref{fig:two-stage_filtering}). Writing
\begin{equation*}
\begin{aligned}
p_\text{struct} &:= P(x \in \mathcal{M}_\text{struct}), \\
p_{\text{func}\mid\text{struct}} &:= P(x \in \mathcal{M}_\text{func} \mid x \in \mathcal{M}_\text{struct}),
\end{aligned}
\end{equation*}
for the structural and conditional functional pass rates analyzed in 
Sections~\ref{sec:div_high} and~\ref{sec:func-validity} respectively, 
and with per-candidate verification costs $C_\text{struct}\in\Theta(nd)$ 
and $C_\text{func}\in\Theta(2^n)$, the expected cost to obtain one 
functionally correct candidate is:
\begin{equation}
\label{eq:total-cost}
    \mathbb{E}[C] 
    \;=\; 
    \frac{C_\text{struct}}{p_\text{struct} \cdot p_{\text{func}\mid\text{struct}}}
    \;+\;
    \frac{C_\text{func}}{p_{\text{func}\mid\text{struct}}}.
\end{equation}
We substitute the regime-specific bounds established earlier: 
$p_\text{struct}^{\text{NISQ}} = \Theta(1)$ versus 
$p_\text{struct}^{\text{FTQC}} \leq e^{-\beta n}$ from 
Proposition~\ref{prop:ftqc-sampling-app}, where $\beta > 0$ denotes the 
implicit constant in its $\Omega(n)$ exponent; and 
$p_{\text{func}\mid\text{struct}} \leq C_\theta \cdot e^{-\gamma n}$ from 
Equation~\eqref{eq:func-pass}, with regime-specific exponents 
$\gamma_\text{NISQ} = \Omega(d\log(1/\varepsilon))$ and 
$\gamma_\text{FTQC} = \Omega(d\log n)$ from 
Propositions~\ref{prop:func-sparsity-nisq} 
and~\ref{prop:func-sparsity-ftqc}. This yields
\begin{align}
\label{eq:nisq-cost}
\mathbb{E}[C]^{\text{NISQ}}  &= \mathcal{O}\!\left(e^{(\ln 2 + \gamma_\text{NISQ})\, n}\right), \\
\label{eq:ftqc-cost}
\mathbb{E}[C]^{\text{FTQC}}  &= \mathcal{O}\!\left(e^{(\max(\ln 2,\, \beta) + \gamma_\text{FTQC})\, n}\right).
\end{align}
NISQ couples two exponentials (simulation cost $\ln 2$ plus functional 
sparsity $\gamma_\text{NISQ}$); FTQC couples three, with structural 
sparsity $\beta$ contributing alongside $\gamma_\text{FTQC}$ and 
combining with $\ln 2$ via the max --- whichever of structural sampling 
or functional verification is more expensive dominates. Attempting to 
bypass these bounds via matrix product states fails for 
quantum-advantage regimes: useful circuits exhibit volume-law 
entanglement, forcing the bond dimension to grow exponentially 
\citep{vidal2003efficient, schuch2008entropy}. Specialized verifiers 
exist for structured subclasses (Clifford, local Hamiltonian, Clifford+T 
via ZX; see Appendix~\ref{app:func-verification}) but not for the 
generic synthesis problem.

\begin{tcolorbox}[
    colback=blue!4!violet!3!white,
    colframe=blue!30!violet!50,
    boxrule=0.8pt,
    arc=2mm,
    left=8pt, right=8pt, top=8pt, bottom=8pt
]
\textbf{\textsc{Position 2:}} Probabilistic scaling cannot close the
coupled exponential gap of Equation~\eqref{eq:total-cost}: structural
sparsity in FTQC and functional sparsity in both regimes compound with
simulation cost. We assert that validity must be enforced
\emph{constructively} during candidate generation, not \emph{filtered}
after it, shifting the verifier from an output gate to a generation
invariant.
\end{tcolorbox}

%%%%%%%%%%%%%%%%%%%%%%%%%%%%%%%%%%%%%%%%%%%%%%%%%%%%%%%%%%%%%%%%%%%%%%%%%%%%
%% Section 4: The Verifier-Centric Paradigm
%%%%%%%%%%%%%%%%%%%%%%%%%%%%%%%%%%%%%%%%%%%%%%%%%%%%%%%%%%%%%%%%%%%%%%%%%%%%

\section{The Verifier-centric Paradigm}
\label{sec:verifier}

Current industry copilots optimize for human-centered objectives \cite{barke2023grounded,vaithilingam2022expectation,liang2024understanding}: variable naming, commenting, and syntactic sugar. While valuable for classical software where the human serves as the ultimate reviewer, this alignment constitutes a misallocation of resources for quantum program generation. %At the scale of quantum advantage ($n > 50$), the Hilbert space dimension exceeds $10^{15}$ \cite{preskill2018quantum,arute2019quantum}, rendering intuitive human verification impossible. 
The breakdown of human verification is not asymptotic but concrete: classical state-vector simulation, the most direct verification aid available to a human reviewer, becomes infeasible beyond roughly 40–50 qubits on current hardware \cite{dalzell2020howmany,arute2019quantum}, and recent experiments on IBM's 127-qubit Eagle experiments \cite{kim2023evidence} already operate well outside this regime. At these scales, the combination of syntax-semantics gap (Section \ref{sec:data_crisis}), the exponential sparsity of valid circuits (Section \ref{sec:complexity}), and the fundamental quantum properties such as no-cloning makes manual deduction of circuit correctness intractable for any human reviewer. Human supervision becomes a bottleneck rather than a safeguard.
% Due to the uniquely quantum features mentioned so far including the syntax-semantics gap (Section \ref{sec:data_crisis}) and exponential sparsity of valid circuits (Section \ref{sec:complexity}), combined with fundamental properties of quantum mechanics such as no-cloning, the complexity of genuinely post-classical quantum circuits will be beyond the grasp of human engineers to manually deduce their correctness. Human supervision becomes a bottleneck rather than a safeguard.

We posit that the objective of AI tools for rigorous science must invert. The target audience is not the human developer but the \emph{formal verifier}. The agent functions as a high-throughput heuristic proposer for the solver, operating at machine speed and unconstrained by human verification bandwidth. 

% \begin{tcolorbox}[
%     colback=gray!5!white,
%     colframe=black,
%     boxrule=0.8pt,
%     arc=0mm,
%     title=\textbf{\textsc{Paradigm Shift: The Verifier-Centric Architecture}},
%     fonttitle=\bfseries\small,
%     left=6pt, right=6pt, top=6pt, bottom=6pt
% ]
% \small
% \textbf{The Probabilistic Copilot (Current State)}
% \begin{itemize}[leftmargin=1.2em, noitemsep, topsep=2pt]
%     \item \textit{Objective:} Maximize likelihood $P(\text{token} | \text{human\_corpus})$.
%     \item \textit{Verifier:} Post-hoc, manual, or unit-test based.
%     \item \textit{Failure Mode:} Confident hallucination of plausible syntax.
% \end{itemize}
% \vspace{4pt}
% \hrule
% \vspace{4pt}
% \textbf{The Verifier-Centric Agent (Our Proposal)}
% \begin{itemize}[leftmargin=1.2em, noitemsep, topsep=2pt]
%     \item \textit{Objective:} Maximize $\mathbb{E}[\text{Quality}]$ subject to $\Verifier(c) \models \text{True}$.
%     \item \textit{Verifier:} Intrinsic, hierarchical, and solver-in-the-loop.
%     \item \textit{Mechanism:} Constructive generation where constraints are preconditions for token emission.
% \end{itemize}
% \end{tcolorbox}

% \begin{figure}[htbp]
%     \centering
%     \includegraphics[width=0.95\linewidth]{figs/Paradigm_Shift_from_Probabilistic_Copilots_to_Verifier-Centric_Agents.pdf}
%     \caption{Paradigm shift from probabilistic copilots to verifier-centric agents.}
%     \label{fig:paradigm}
% \end{figure}

\subsection{Constructive Verification Protocols}
\label{subsec:construct}

To surmount the exponential cost of post-selection identified in Section~\ref{sec:complexity}, we propose replacing blind generation with {constructive verification} (Figure \ref{fig:markov}). This framework adapts the {constrained decoding} principles \cite{shin2021constrained} to domains of rigorous sciences, which have been proven effective in semantic parsing tasks like SQL generation \cite{scholak2021picard}.

We formalize this process as a trajectory search through the discrete syntactic space $\mathcal{X}$. 
The iterative abstract quantum circuit/program design process is modeled as a Markov Decision Process $\mathcal{M}=(\mathcal{S},\mathcal{A},P)$ where a policy is a conditional distribution
$\pi_\theta(a_t|s_t)$ that samples an action $a_t\in\mathcal{A}$ at time step $t$ given the current state $s_t\in\mathcal{S}$ and the environment dynamics are given by a transition kernel
$P(s_{t+1}| s_t,a_t)$ \cite{puterman1994mdp,sutton2018rl}.

\begin{itemize}[leftmargin=*, noitemsep, topsep=-2pt]
  \item \textbf{State space $\mathcal{S}$.}
  A state $s_t$ is the agent's current {design snapshot}, denoted as
  \[
    s_t := \big(x_t,\;\phi_t,\;\eta_t,\;\kappa_t\big),
  \]
  where $x_t\in\mathcal{X}$ is the current abstract program/circuit is in a fixed intermediate representation (IR),
  $\phi_t$ has current formal parameters (e.g., input size $n$, error budgets, success probability targets),
  $\eta_t$ stores the latest evaluation outputs/metrics (depth, $T$-count, ancilla count, asymptotic scaling estimates),
  and $\kappa_t$ has any auxiliary bookkeeping needed to make the process Markov (e.g., tool call traces, module-commitment flags).

  \item \textbf{Action space $\mathcal{A}$.}
  Each action corresponds to invoking a \emph{tool} from a fixed library with parameters, e.g.
  \[
    a_t \in \{\textsc{Edit}[\rho,\alpha],\;\textsc{Eval},\;\textsc{Finish}\},
  \]
  where $\textsc{Edit}[\rho,\alpha]$ applies a property-preserving rewrite schema $\rho$ with arguments $\alpha$
  (chosen from a verified transformation library), see Figure \ref{fig:cuccaro_adder}c.
  $\textsc{Eval}$ re-checks correctness and evaluates metrics,
  and $\textsc{Finish}$ terminates and returns the final template and summary.

  \item \textbf{Dynamics $P(s_{t+1}|s_t,a_t)$.}
  The transition kernel is induced by the tool execution and its {guardrails}:
  \[
    s_{t+1} \sim P(\cdot|s_t,a_t) \;\;\Leftrightarrow\;\; s_{t+1}=\textsc{ToolStep}(s_t,a_t).
  \]
  Concretely, for an edit action $a_t=\textsc{Edit}[\rho,\alpha]$ for some $\rho$ and $\alpha$, $\textsc{ToolStep}(s_t,a_t)=\big(x_t',\phi_t,\eta_t,\kappa_t'\big)$ if the rewrite is admitted and applied, and $\textsc{ToolStep}(s_t,a_t)=\big(x_t,\phi_t,\eta_t,\kappa_t''\big)$ if the action is blocked/rejected (no-op).

  \item \textbf{Validity as a {state-dependent admissible action set}.}
  The semantic constraints discussed in Sections \ref{sec:data_crisis} and \ref{sec:complexity} imply that, for each state $s_t$, there is
  an admissible action subset $\mathcal{A}_{\mathrm{valid}}(s_t)\subseteq\mathcal{A}$ to ensure that $x_{t+1}\in\mathcal{M}_\text{func}$.
  We implement this through a \emph{validity filter} (a.k.a.\ shield / mask)
  \[
    \mathcal{V}(a,s) \in \{0,1\},\quad
    \mathcal{A}_{\mathrm{valid}}(s):=\{a\in\mathcal{A}:\mathcal{V}(a,s)=1\}.
  \]
  The filter may be composed of multiple components and dependent on the action $a_t$ taken. For instance, if $a_t=\textsc{Edit}(\text{Parallelize},(g_1,g_2))$ for some chosen pair of gates $g_1$, $g_2$ in the circuit $x_t$, namely if the action is to try parallelizing $g_1$ and $g_2$, the validity filter needs to check 1) that the two gates operate on distinct sets of qubits ("no-overlap") and 2) there is no obstruction for the two gates to be moved to the same layer ("no-obstacle"). In this case the validity filter takes the form of
  \begin{equation}\label{eq:filter_comp}
    \mathcal{V}(a,s) \;=\; \mathcal{V}_{\text{no-overlap}}(a,s)\cdot \mathcal{V}_{\text{no-obstacle}}(a,s),
  \end{equation}
  Here we expect each component of the validity filter to run in poly$(n)$ time. The components of the validity filters in \eqref{eq:filter_comp} clearly run in polynomial time.

  \item \textbf{Constrained policy.}
  Let $\pi_\theta^{(0)}(a|s)$ be the {unconstrained} LLM-induced proposal over tool calls.
  The constructive-verification policy is proportional to the unconstrained proposal multiplied by the validity filter (up to normalization):
  \begin{equation}\label{eq:cons_pol}
    \pi_\theta(a|s)
    \;\propto\;
    \pi_\theta^{(0)}(a|s)\;\cdot\;\mathcal{V}(a,s).
  \end{equation}
  The product form of the constrained policy in \eqref{eq:cons_pol} is reminiscent of the standard invalid-action masking construction in reinforcement learning \cite{huang2020masking,hou2023masking}.
\end{itemize}

\begin{figure}
    \centering
    \includegraphics[width=\linewidth]{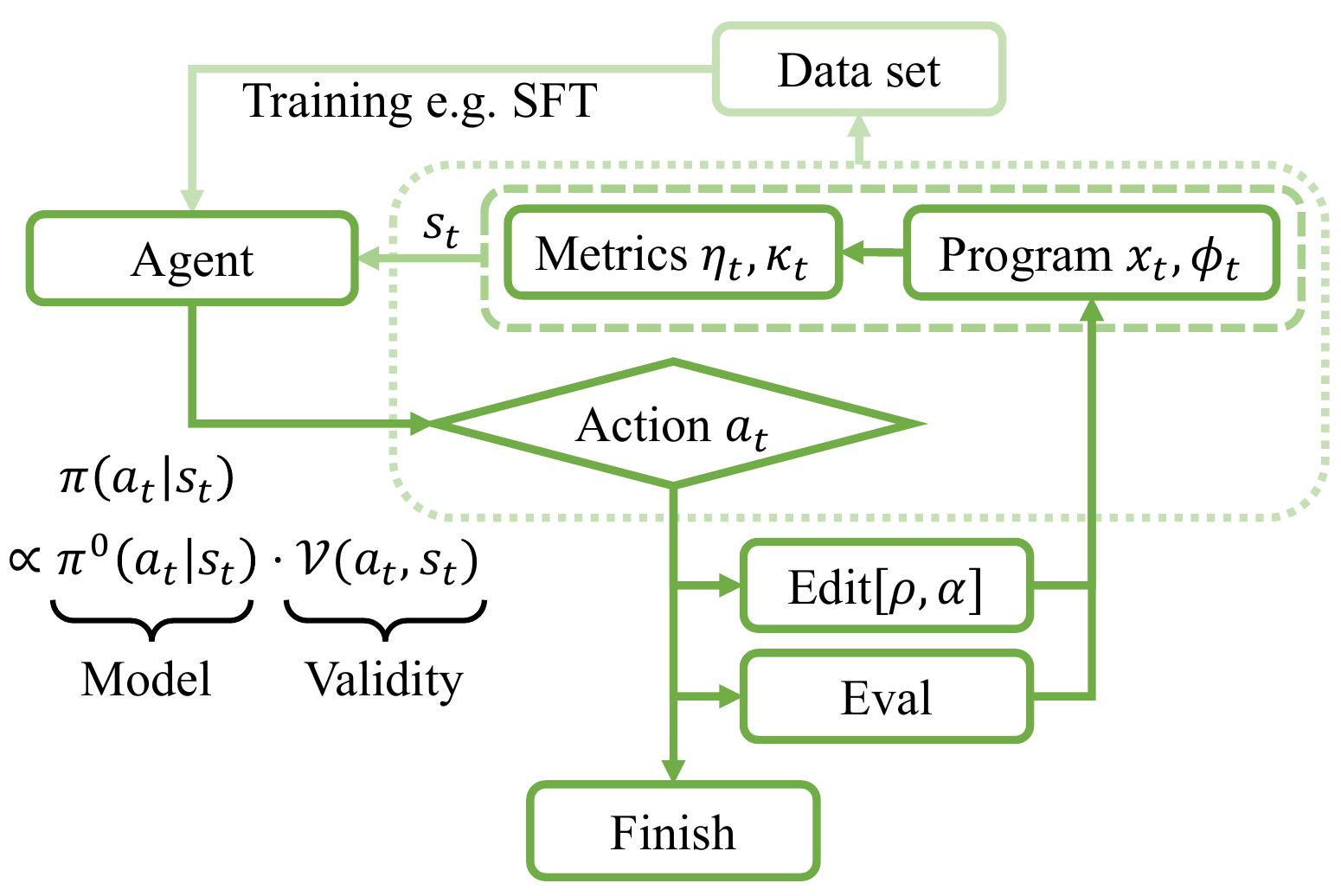}
    \caption{\textbf{Control flow of the verification protocol.} This can be considered as an expanded view of Figure \ref{fig:fig2_comparison} where the decision process of the verifier-centric agent is expanded in greater detail.}
    \label{fig:markov}
\end{figure}

\subsection{Multiple Levels of Abstraction}
\label{subsec:levels}

\begin{figure*}[htbp]
    \centering
    \includegraphics[width=\linewidth]{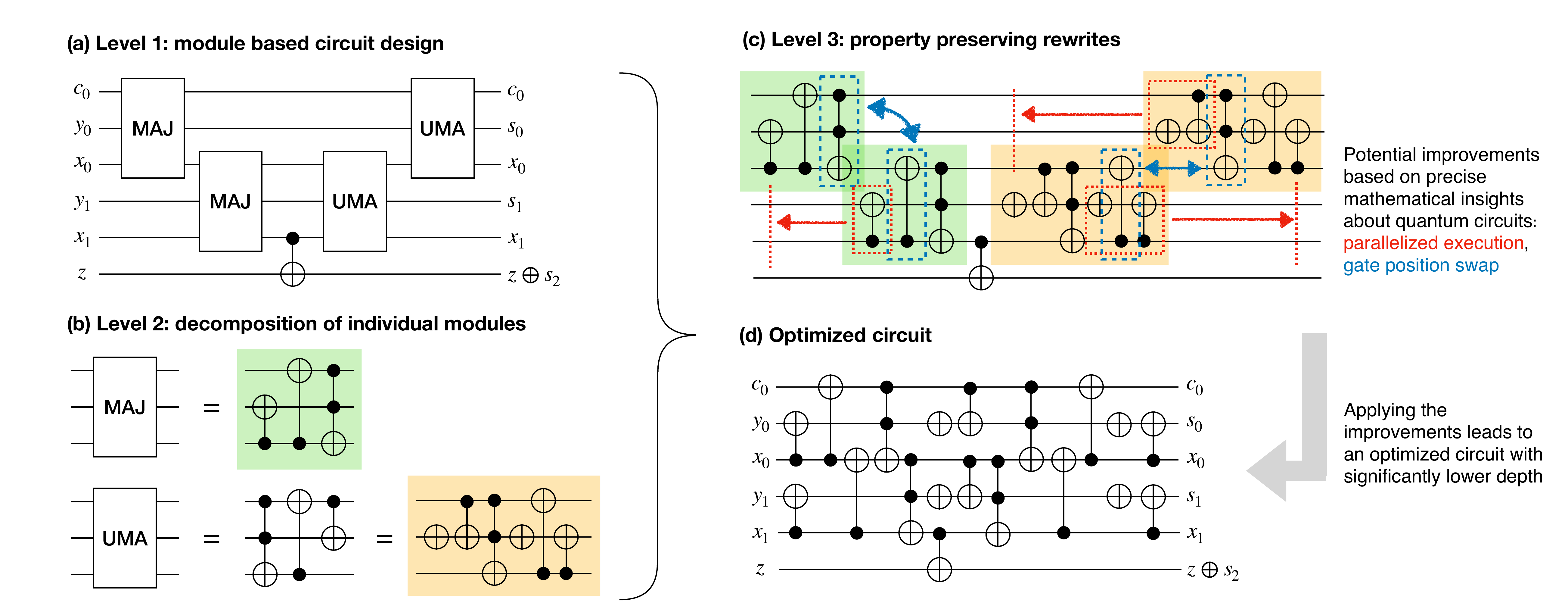}
    \caption{Illustration of the different levels of abstraction using the Cuccaro Adder \cite{Cuccaro2004ripple} as an example. \textbf{(a)} Level 1 in the hierarchy describes the abstract circuit design based on modules MAJ and UMA. Here a concrete example with $n=2$ is shown. \textbf{(b)} Level 2 expands each module into more elementary gate operations. \textbf{(c)} Level 3 identifies opportunities for improvements and acts on those opportunities with property-preserving rewrites $\rho$. Here $\rho\in\{\text{Parallelize}, \text{Swap}\}$ and the parameter set $\alpha$ passed depends on which $\rho$. \textbf{(d)} Applying the improvements (parallelized execution of gates and gate position swap due to commutativity) yields an optimized circuit with lower depth. We also provide a detailed example of how such property perserving rewrites can be applied to the Cuccaro ripple-carry adder in Appendix \Cref{sec:rewrite}.}
    \label{fig:cuccaro_adder}
\end{figure*}

Even though the semantics of an arbitrary $n$-qubit circuit quickly becomes intractable to visually or exhaustively validate as $n$ grows, practical quantum programs are rarely arbitrary: they are typically assembled from a small vocabulary of algorithmic primitives with a strong subroutine containment structure. As an example, the circuit for Shor's algorithm \citep[Chapter 5]{nielsen2010quantum} is built from quantum fourier transform (QFT) and modular exponentiation blocks. Modular exponentiation then decomposes into modular multiplication and modular addition, each with well-understood reversible implementations and local invariants (e.g., ancilla management and uncomputation patterns) that can be verified and constrained module-by-module rather than by reasoning about an entire unitary monolithically. This multi-level containment is the key escape hatch from the brute-force complexity discussed in Section \ref{sec:complexity}: instead of handling a large quantum circuit as a monolith, we validate and constrain generation at the granularity of primitives (QFT, modular addition/multiplication/exponentiation, etc.), and then compose those certificates—exactly the kind of hierarchical abstraction that a verifier-centric agent must exploit.

\textbf{Level 1: module based circuit design.}
At the top level, the agent manipulates a {parametric} quantum program with $n$ treated as a formal parameter (alongside other algorithmic parameters), and subroutines are invoked as black boxes.  Verification at this level is therefore {structural}: the agent checks that the correct high-level subroutine calls appear in the correct order and with consistent wiring of symbolic registers, without expanding the subroutines into gates.
For the Cuccaro ripple-carry adder \cite{Cuccaro2004ripple}, the $n$-bit in-place addition can be written as a short program over two 3-bit modules, $\mathrm{MAJ}$ and $\mathrm{UMA}$, applied in forward/backward sweeps (Figure \ref{fig:cuccaro_adder}a). At this abstraction, the agent reasons about {containment} where the adder is composed of $\mathrm{MAJ}/\mathrm{UMA}$ modules.

% \textbf{Level 2: topological compliance.} Before emitting a gate $U_{ij}$, the agent queries the hardware graph $G=(V, E)$. If $(i, j) \notin E$, the token is masked ($\mathbb{I}=0$). This enforces the topological manifold $\mathcal{M}_\text{topo}$ strictly at generation time. Unlike post-hoc transpilation which inserts SWAPs heuristically, the generative model learns to {route} logic through the topology natively.

\textbf{Level 2: decomposition of individual modules.}
Level~2 is the first level where the focus shifts to circuits on $O(1)$ qubits: each black-box module is compiled into a concrete gate-level implementation.
For the adder, this means choosing decompositions of the 3-qubit blocks
$\mathrm{MAJ}$ and $\mathrm{UMA}$ into elementary gates
(e.g., CNOT/Toffoli or backend-native one- or two-qubit gates), exactly as illustrated in Figure \ref{fig:cuccaro_adder}b.
This is the regime where typical quantum gate transpilers operate: correctness obligations are localized to a small footprint, and the agent can use standard synthesis/transpilation toolchains \cite{javadi2024quantum} to map each module to a target gate set and connectivity.

% \textbf{Level 3: symbolic heuristics.} Within a module, the agent utilizes lightweight symbolic checkers such as ZX-calculus \cite{coecke2011interacting,kissinger2020pyzx} or Clifford frame tracking \cite{gottesman1998heisenberg}. While incomplete for universal non-Clifford circuits, these methods serve as a {coarse-grained filter}. They efficiently detect the majority of local invariant violations (e.g., phase mismatch, Pauli frame errors) in polynomial time. Full state-vector simulation is reserved only for the final verification of small, committed blocks that pass these symbolic checks.

\textbf{Level 3: elementary-operation optimization (property-preserving rewrites).}
At Level~3, the program is treated as a monolithic sequence of elementary operations obtained by {inlining} the Level 2 decompositions into the Level 1 program.  The agent then searches for depth/gate-count improvements via {property-preserving rewrites} and local commutation or parallelization opportunities (Figure \ref{fig:cuccaro_adder}c): e.g., swapping commuting gates, canceling inverse pairs, sliding CNOTs across Toffolis when allowed, and parallelizing disjoint operations.  Crucially, Level~3 remains aware of higher-level choices: selecting a different Level~2 decomposition (or changing the Level~1 module schedule) can expose or eliminate rewrite opportunities, so optimization is inherently cross-level even though the rewrite rules act on elementary gates. 

To illustrate how the control framework outlined in Section \ref{subsec:construct} can be applied across multiple layers of abstraction (Figure \ref{fig:cuccaro_adder}), a detailed worked example is shown in \Cref{sec:rewrite} in the Appendix. Here the goal is to demonstrate the form of the program $x_t$ and how property preserving rewrites can help the agent work through different layers of abstraction.

\subsection{The Data Imperative}
\label{subsec:data}

Our earlier work \cite{song2026data} argued for verified training data as an architectural primitive rather than a curation afterthought. The present work supplies its complexity foundation. Standard code-generation pipelines rely on syntactic supervision, which suffices for classical software but collapses in scientific domains where syntactic fluency is decoupled from physical semantics (\Cref{sec:data_crisis}). A model trained on such data mimics how humans write code while ignoring the underlying constraints. We therefore advocate trace-based datasets, analogous to process reward models in mathematical reasoning \cite{lightman2023lets}. Data for scientific agents must contrast optimal solutions against valid but sub-optimal candidates, encoding the optimization manifold rather than a binary correctness function.

Narrowing the syntax–semantics gap forces the agent to internalize hard domain constraints, not just surface syntax. This is the broader AI4Science mandate. Scientific data is scarce, so inductive bias must be supplied a priori. Otherwise, scaling only produces hallucinations at a larger scale.

% \vspace{-0.2em}
\begin{tcolorbox}[
    colback=blue!4!violet!3!white,
    colframe=blue!30!violet!50,
    boxrule=0.8pt,
    arc=2mm,
    left=8pt, right=8pt, top=8pt, bottom=8pt
]
\textbf{\textsc{Position 3:}} AI4Science must transition from open-loop generation to closed-loop constructive verification. We contend that integrating constraints across multiple levels of abstraction is a viable path to scale modular quantum synthesis beyond the limits of human cognition.
\end{tcolorbox}
% \vspace{-0.8em}

%%%%%%%%%%%%%%%%%%%%%%%%%%%%%%%%%%%%%%%%%%%%%%%%%%%%%%%%%%%%%%%%%%%%%%%%%%%%
%% Section 5: Alternative Views
%%%%%%%%%%%%%%%%%%%%%%%%%%%%%%%%%%%%%%%%%%%%%%%%%%%%%%%%%%%%%%%%%%%%%%%%%%%%

\section{Alternative Views}
\label{sec:alternative_views}

This paper proposes to shift from human-centric copilots to verifier-centric agents, which challenges the generative AI (GenAI) orthodoxy. Below, we address three common counter-arguments. We contrast GenAI wisdom with the strict requirements of quantum program generation.

\textbf{View 1: Scaling maximalist.} In the machine learning community, we often hear that the model with enough parameters will have an ``Aha'' moment to understand the logic of events or physics rules (quantum circuits) purely \cite{wei2022emergent}. Many researchers argue that specialized verification is unnecessary. They posit that with sufficient parameter scaling, models also will eventually ``grok'' the logic of quantum circuits purely by training on vast corpora of QASM and Python code.

\textbf{Rebuttal: The inverse scaling of quantum correctness.} This extrapolation conflates natural language fluency with physical validity. Large models often exhibit inverse scaling on tasks with strict logical constraints \cite{mckenzie2023inverse, zhou2024larger}, becoming more confident in their hallucinations \cite{dziri2023faith}. A larger model may emit syntactically perfect QASM yet fail to implement the target unitary due to subtle phase errors or invalid gate decompositions (see \Cref{sec:data_crisis} and \Cref{fig:fig2_comparison}). Scaling amplifies stylistic mimicry while remaining blind to Hilbert-space constraints, and the same pathology surfaces at inference, where scaling test-time compute pushes models into unfocused exploration \cite{gema2025inverse}. The admissible action set $\mathcal{A}_{\text{valid}}(s)$ in \Cref{sec:verifier} rules out both failure modes by construction.

%%%

\textbf{View 2: Developer experience advocate.} A common objection from the software engineering and human--AI interaction communities holds that coding assistants must remain legible and interactive to the human developer, even if final correctness adjudication is delegated to a solver. Optimizing purely for a formal verifier risks producing opaque artifacts that erode developer trust.

\textbf{Rebuttal: Legibility lives in the trace, not the artifact.} We do not eliminate the human; we relocate their role from correctness adjudicator (already intractable beyond $\sim$50 qubits, \Cref{sec:verifier}) to specification author and trace auditor. The trace-based datasets advocated in \Cref{subsec:data} record \emph{why} each rewrite was applied and \emph{which alternatives were rejected}, providing a richer interpretability surface than the final circuit. The human reads the reasoning, not the unitary --- strictly more informative than syntactic copilots, which provide neither correctness guarantees nor process-level traces \cite{lightman2023lets}.

%%%

\textbf{View 3: Computational pragmatist.} Critics might argue that running formal verifiers during generation is computationally too expensive, preferring the less costly inference of standard deep learning models followed by post-hoc filtering.

\textbf{Rebuttal: The economics of quantum discovery.} We argue that the ``Generate then Filter'' paradigm is mathematically ruinous for quantum search spaces, or generally in domains where valid solutions are rare (rejection sampling is notoriously inefficient). AlphaCode required over one million samples to solve hard competitive problems \cite{li2022competition}. Bio-inspired metaheuristics \cite{song2024sais} hit the same exponential wall.
In quantum computing, this inefficiency is exponentially worse. As shown in \Cref{sec:complexity}, the valid subspace decays as $e^{-\gamma n}$. Brute-force sampling is not just expensive, it is intractable compared to verification-aware generation.

%%%

\paragraph{View 4: Quantum verification is a solved problem.}
Mahadev~\citep{mahadev2018classical} gives a classical verifier for arbitrary \textsf{BQP} computations under post-quantum cryptographic assumptions, with follow-up extending the protocol to blind, succinct, and non-interactive settings~\citep{fitzsimons2017unconditionally,reichardt2013classical,bartusek2022succinct}. If verification is already solved, why claim it as the bottleneck?

\paragraph{Rebuttal: Execution verification is not functional verification.}
That line verifies a prover \emph{honestly executed a specified circuit} $C$. Our concern is the orthogonal problem: given a functional specification $U_{\mathrm{target}}$, decide whether a candidate $C$ implements it. Mahadev's protocol takes $C$ as input and is silent on whether $C$ was the right program in the first place. This is the gap between $\mathcal{M}_{\mathrm{struct}}$ and $\mathcal{M}_{\mathrm{func}}$ in Section~\ref{sec:verifier}; the \textsf{QMA}-completeness of non-identity testing~\citep{watrous2009quantum} applies directly to the latter and is not lifted by cryptographic execution verification.

%%%%%%%%%%%%%%%%%%%%%%%%%%%%%%%%%%%%%%%%%%%%%%%%%%%%%%%%%%%%%%%%%%%%%%%%%%%%
%% Section 6: Conclusion
%%%%%%%%%%%%%%%%%%%%%%%%%%%%%%%%%%%%%%%%%%%%%%%%%%%%%%%%%%%%%%%%%%%%%%%%%%%%

\section{Conclusion}
\label{sec:conclusion}

We have argued that probabilistic scaling alone is unlikely to close the validity gap in quantum program generation. Training on unverified code introduces a structural bias against physical validity, and the exponential sparsity of functionally correct circuits makes post-hoc filtering intractable for generic synthesis. Larger models trained this way fit a biased distribution more faithfully, rather than converging on the manifold of valid programs.

We therefore advocate a shift from generate-then-filter pipelines to verifier-centric generation. Solving arbitrary circuits remains out of reach, but the hierarchical, module-based approach of Section~\ref{sec:verifier} is tractable for the broad class of algorithms built from well-understood primitives, by enforcing topological and symbolic constraints during decoding. Trace-based datasets, which record why a rewrite was chosen and why alternatives were rejected, offer a natural training signal for this regime.

% \vspace{-0.2em}
\begin{tcolorbox}[
    colback=blue!4!violet!3!white,
    colframe=blue!30!violet!50,
    boxrule=0.8pt,
    arc=2mm,
    left=8pt, right=8pt, top=8pt, bottom=8pt
]
\centering
\textbf{In domains with hard physical and mathematical constraints, generative models should be coupled to constructive verification rather than relying on scale to recover validity after the fact.}
\end{tcolorbox}

\section*{Impact Statement}
Scaling on corpora will not close the gap between fluency and correctness when validity is governed by mathematical structure rather than statistical regularity. Larger models trained on unverified quantum code yield more confident hallucinations, not useful software. The same failure recurs wherever valid outputs occupy a vanishing fraction of a high-dimensional space, including protein design, theorem proving, and chip layout. Progress there will come from smaller systems coupled to formal checkers, not from larger imitators. A verifier-centric paradigm relocates trust onto the verifiers themselves. A subtle solver bug can silently certify invalid designs at scale, so auditing the verifier stack must sit alongside model training as a research priority. Constraint-aware generation makes outputs easier to audit, which reduces rather than increases risk.

% Acknowledgements should only appear in the accepted version.
\section*{Acknowledgements}
Many thanks to the Imperial College Centre for Cryptocurrency Research and Engineering for research support. Thanks to Runzhou Tao (University of Maryland) for early discussion. Junhao Song is a first-year PhD student fully funded by the Hitachi-Imperial Centre for Decarbonisation and Natural Climate Solutions, a collaboration between Hitachi Ltd, Hitachi Europe and Imperial College London.

\newpage
\bibliography{example_paper}

@article{austin2021program,
  title={Program Synthesis with Large Language Models},
  author={Austin, Jacob and Odena, Augustus and Nye, Maxwell and Bosma, Maarten and Michalewski, Henryk and Dohan, David and Jiang, Ellen and Cai, Carrie and Terry, Michael and Le, Quoc and others},
  journal={arXiv preprint arXiv:2108.07732},
  year={2021},
  doi={10.48550/arXiv.2108.07732}
}

@article{chen2021evaluating,
  title={Evaluating Large Language Models Trained on Code},
  author={Chen, Mark and Tworek, Jerry and Jun, Heewoo and Yuan, Qiming and Pinto, Henrique Ponde de Oliveira and Kaplan, Jared and Edwards, Harri and Burda, Yuri and Joseph, Nicholas and Brockman, Greg and others},
  journal={arXiv preprint arXiv:2107.03374},
  year={2021},
  doi={10.48550/arXiv.2107.03374}
}

@inproceedings{dupuis2024qiskit,
  title={Qiskit code assistant: Training {LLMs} for generating quantum computing code},
  author={Dupuis, Nicolas and Buratti, Luca and Vishwakarma, Sanjay and Forrat, Aitana Viudes and Kremer, David and Faro, Ismael and Puri, Ruchir and Cruz-Benito, Juan},
  booktitle={2024 IEEE LLM Aided Design Workshop (LAD)},
  pages={1--4},
  year={2024},
  organization={IEEE},
  doi={10.1109/LAD62341.2024.10691762}
}

@article{ruiz2025quantum,
  title={Quantum circuit optimization with {AlphaTensor}},
  author={Ruiz, Francisco JR and Laakkonen, Tuomas and Bausch, Johannes and Balog, Matej and Barekatain, Mohammadamin and Heras, Francisco JH and Novikov, Alexander and Fitzpatrick, Nathan and Romera-Paredes, Bernardino and van de Wetering, John and others},
  journal={Nature Machine Intelligence},
  pages={1--12},
  year={2025},
  publisher={Nature Publishing Group UK London}
}

@article{romeraparedes2024mathematical,
  title = {Mathematical discoveries from program search with large language models},
  author = {Romera-Paredes, Bernardino and others},
  journal = {Nature},
  volume = {625},
  number = {7995},
  pages = {468--475},
  year = {2024},
  doi = {10.1038/s41586-023-06924-6}
}

@article{vishwakarma2024qiskit,
  title={{Qiskit} {HumanEval}: An Evaluation Benchmark For Quantum Code Generative Models},
  author={Vishwakarma, Sanjay and Harkins, Francis and Golecha, Siddharth and Bajpe, Vishal Sharathchandra and Dupuis, Nicolas and Buratti, Luca and Kremer, David and Faro, Ismael and Puri, Ruchir and Cruz-Benito, Juan},
  journal={arXiv preprint arXiv:2406.14712},
  year={2024},
  doi={10.48550/arXiv.2406.14712}
}

@article{kaplan2020scaling,
  title={Scaling laws for neural language models},
  author={Kaplan, Jared and McCandlish, Sam and Henighan, Tom and Brown, Tom B and Chess, Benjamin and Child, Rewon and Gray, Scott and Radford, Alec and Wu, Jeffrey and Amodei, Dario},
  journal={arXiv preprint arXiv:2001.08361},
  year={2020},
  doi={10.48550/arXiv.2001.08361}
}

@InProceedings{diekmann2020dontpanic,
  title = {Don't Panic! Better, Fewer, Syntax Errors for LR Parsers},
  author = {Diekmann, Lukas and Tratt, Laurence},
  booktitle = {34th European Conference on Object-Oriented Programming (ECOOP 2020)},
  series = {Leibniz International Proceedings in Informatics (LIPIcs)},
  volume = {166},
  pages = {6:1--6:32},
  year = {2020},
  publisher = {Schloss Dagstuhl--Leibniz-Zentrum f{\"u}r Informatik},
  doi = {10.4230/LIPIcs.ECOOP.2020.6}
}

@book{puterman1994mdp,
  author    = {Martin L. Puterman},
  title     = {Markov Decision Processes: Discrete Stochastic Dynamic Programming},
  publisher = {John Wiley \& Sons},
  year      = {1994},
  series    = {Wiley Series in Probability and Statistics},
  doi       = {10.1002/9780470316887},
  isbn      = {978-0-471-61977-2}
}

@book{sutton2018rl,
  author    = {Richard S. Sutton and Andrew G. Barto},
  title     = {Reinforcement Learning: An Introduction},
  edition   = {2},
  publisher = {The MIT Press},
  year      = {2018},
  doi       = {10.5555/3312046}
}

@article{ramalho2024testing,
  title={Testing and Debugging Quantum Programs: The Road to 2030},
  author={Ramalho, Neilson Carlos Leite and de Souza, Higor Amario and Chaim, Marcos Lordello},
  journal={arXiv preprint arXiv:2405.09178},
  year={2024},
  doi={10.48550/arXiv.2405.09178}
}

@misc{Cuccaro2004ripple,
  author = {Cuccaro, Steven A. and Draper, Thomas G. and Kutin, Samuel A. and Moulton, David Petrie},
  title = {A new quantum ripple-carry addition circuit},
  doi = {10.48550/arxiv.quant-ph/0410184},
  publisher = {arXiv},
  year = {2004}
}

@article{kambhampati2024llms,
  title={{LLMs} can't plan, but can help planning in {LLM}-modulo frameworks},
  author={Kambhampati, Subbarao and Valmeekam, Karthik and Guan, Lin and Verma, Mudit and Stechly, Kaya and Bhambri, Siddhant and Saldyt, Lucas and Murthy, Anil},
  journal={arXiv preprint arXiv:2402.01817},
  year={2024}
}

@article{bennett1973logical,
  author  = {Bennett, Charles H.},
  title   = {Logical Reversibility of Computation},
  journal = {IBM Journal of Research and Development},
  volume  = {17},
  number  = {6},
  pages   = {525--532},
  year    = {1973},
  month   = nov,
  doi     = {10.1147/rd.176.0525}
}

@article{gulati2024evaluation,
  title={An Evaluation Benchmark for Autoformalization in {Lean4}},
  author={Gulati, Aryan and Ladsaria, Devanshu and Mishra, Shubhra and Sidhu, Jasdeep and Miranda, Brando},
  journal={arXiv preprint arXiv:2406.06555},
  year={2024}
}

@article{ospanov2025apollo,
  title={APOLLO: Automated {LLM} and {Lean} Collaboration for Advanced Formal Reasoning},
  author={Ospanov, Azim and Farnia, Farzan and Yousefzadeh, Roozbeh},
  journal={arXiv preprint arXiv:2505.05758},
  year={2025}
}

@article{cummins2023llmcompileropt,
  title={Large Language Models for Compiler Optimization},
  author={Cummins, Chris and Seeker, Volker and Grubisic, Dejan and Elhoushi, Mostafa and Liang, Youwei and Roziere, Baptiste and Gehring, Jonas and Gloeckle, Fabian and Hazelwood, Kim and Synnaeve, Gabriel and Leather, Hugh},
  journal={arXiv preprint arXiv:2309.07062},
  year={2023},
  doi={10.48550/arXiv.2309.07062}
}

@article{fang2024transferable,
  title={Transferable Presynthesis PPA Estimation for {RTL} Designs With Data Augmentation Techniques},
  author={Fang, Wenji and Lu, Yao and Liu, Shang and Zhang, Qijun and Xu, Ceyu and Wills, Lisa Wu and Zhang, Hongce and Xie, Zhiyao},
  journal={IEEE Transactions on Computer-Aided Design of Integrated Circuits and Systems},
  volume={44},
  number={1},
  pages={200--213},
  year={2024},
  publisher={IEEE}
}

@article{wei2022emergent,
  title={Emergent abilities of large language models},
  author={Wei, Jason and Tay, Yi and Bommasani, Rishi and Raffel, Colin and Zoph, Barret and Borgeaud, Sebastian and Yogatama, Dani and Bosma, Maarten and Zhou, Denny and Metzler, Donald and others},
  journal={TMLR},
  year={2022},
  doi={10.48550/arXiv.2206.07682}
}

@article{mckenzie2023inverse,
  title={Inverse scaling: When bigger isn't better},
  author={McKenzie, Ian R and Lyzhov, Alexander and Pieler, Michael and Parrish, Alicia and Mueller, Aaron and Prabhu, Ameya and McLean, Euan and Kirtland, Aaron and Ross, Alexis and Liu, Alisa and others},
  journal={TMLR},
  year={2023},
  doi={10.48550/arXiv.2306.09479}
}

@article{kim2023evidence,
  title={Evidence for the utility of quantum computing before fault tolerance},
  author={Kim, Youngseok and Eddins, Andrew and Anand, Sajant and Wei, Ken Xuan and Van Den Berg, Ewout and Rosenblatt, Sami and Nayfeh, Hasan and Wu, Yantao and Zaletel, Michael and Temme, Kristan and others},
  journal={Nature},
  volume={618},
  number={7965},
  pages={500--505},
  year={2023},
  publisher={Nature Publishing Group UK London},
  doi={10.1038/s41586-023-06096-3}
}

@article{li2022competition,
  title={Competition-level code generation with {AlphaCode}},
  author={Li, Yujia and Choi, David and Chung, Junyoung and Kushman, Nate and Schrittwieser, Julian and Leblond, R{\'e}mi and Eccles, Tom and Keeling, James and Gimeno, Felix and Dal Lago, Agustin and others},
  journal={Science},
  volume={378},
  number={6624},
  pages={1092--1097},
  year={2022},
  publisher={American Association for the Advancement of Science}
}

@article{dalzell2020howmany,
  title={How many qubits are needed for quantum computational supremacy?},
  author={Dalzell, Alexander M and Harrow, Aram W and Koh, Dax Enshan and La Placa, Rolando L},
  journal={Quantum},
  volume={4},
  pages={264},
  year={2020},
  doi={10.22331/q-2020-05-11-264}
}

@article{arute2019quantum,
  title={Quantum supremacy using a programmable superconducting processor},
  author={Arute, Frank and Arya, Kunal and Babbush, Ryan and Bacon, Dave and Bardin, Joseph C and Barends, Rami and Biswas, Rupak and Boixo, Sergio and Brandao, Fernando GSL and Buell, David A and others},
  journal={Nature},
  volume={574},
  number={7779},
  pages={505--510},
  year={2019},
  doi={10.1038/s41586-019-1666-5}
}

@book{nielsen2010quantum,
  title={Quantum Computation and Quantum Information: 10th Anniversary Edition},
  author={Nielsen, Michael A and Chuang, Isaac L},
  year={2010},
  publisher={Cambridge University Press},
  doi={10.1017/CBO9780511976667}
}

@article{hoffmann2022training,
  title={Training compute-optimal large language models},
  author={Hoffmann, Jordan and Borgeaud, Sebastian and Mensch, Arthur and Buchatskaya, Elena and Cai, Trevor and Rutherford, Eliza and Casas, Diego de Las and Hendricks, Lisa Anne and Welbl, Johannes and Clark, Aidan and others},
  journal={Advances in Neural Information Processing Systems},
  volume={35},
  pages={30016--30030},
  year={2022},
  doi={10.48550/arXiv.2203.15556}
}

@article{watrous2009quantum,
  title={Quantum computational complexity},
  author={Watrous, John},
  journal={Encyclopedia of Complexity and Systems Science},
  pages={7174--7201},
  year={2009},
  publisher={Springer},
  doi={10.1007/978-0-387-30440-3_428}
}

@article{mele2024introduction,
  title={Introduction to {Haar} Measure Tools in Quantum Information: A Beginner's Tutorial},
  author={Mele, Antonio Anna},
  journal={Quantum},
  volume={8},
  pages={1340},
  year={2024},
  doi={10.22331/q-2024-05-08-1340}
}

@article{schuch2008entropy,
  title={Entropy scaling and simulability by matrix product states},
  author={Schuch, Norbert and Wolf, Michael M and Verstraete, Frank and Cirac, J Ignacio},
  journal={Physical Review Letters},
  volume={100},
  number={3},
  pages={030504},
  year={2008},
  doi={10.1103/PhysRevLett.100.030504}
}

@article{vidal2003efficient,
  title={Efficient classical simulation of slightly entangled quantum computations},
  author={Vidal, Guifr{\'e}},
  journal={Physical Review Letters},
  volume={91},
  number={14},
  pages={147902},
  year={2003},
  doi={10.1103/PhysRevLett.91.147902}
}

@inproceedings{shin2021constrained,
  title={Constrained Language Models Yield Few-Shot Semantic Parsers},
  author={Shin, Richard and Lin, Christopher H and Thomson, Sam and Chen, Charles and Roy, Subhro and Platanios, Emmanouil Antonios and Pauls, Adam and Klein, Dan and Eisner, Jason and Van Durme, Benjamin},
  booktitle={Proceedings of the 2021 Conference on Empirical Methods in Natural Language Processing},
  pages={7699--7715},
  year={2021},
  doi={10.18653/v1/2021.emnlp-main.608}
}

@article{lightman2023lets,
  title={Let's verify step by step},
  author={Lightman, Hunter and Kosaraju, Vineet and Burda, Yura and Edwards, Harri and Baker, Bowen and Lee, Teddy and Leike, Jan and Schulman, John and Sutskever, Ilya and Cobbe, Karl},
  journal={arXiv preprint arXiv:2305.20050},
  year={2023},
  doi={10.48550/arXiv.2305.20050}
}

@article{zhou2024larger,
  title={Larger and more instructable language models become less reliable},
  author={Zhou, Lexin and Schellaert, Wout and Mart{\'\i}nez-Plumed, Fernando and Moros-Daval, Yael and Ferri, C{\`e}sar and Hern{\'a}ndez-Orallo, Jos{\'e}},
  journal={Nature},
  volume={634},
  number={8032},
  pages={61--68},
  year={2024},
  doi={10.1038/s41586-024-07930-y}
}

@article{dziri2023faith,
  title={Faith and fate: Limits of transformers on compositionality},
  author={Dziri, Nouha and Lu, Ximing and Sclar, Melanie and Li, Xiang Lorraine and Jiang, Liwei and Lin, Bill Yuchen and Welleck, Sean and West, Peter and Bhagavatula, Chandra and Le Bras, Ronan and others},
  journal={Advances in Neural Information Processing Systems},
  volume={36},
  pages={70293--70332},
  year={2023},
  doi={10.48550/arXiv.2305.18654}
}

@article{bouland2018quantum,
  title={Quantum supremacy and the complexity of random circuit sampling},
  author={Bouland, Adam and Fefferman, Bill and Nirkhe, Chinmay and Vazirani, Umesh},
  journal={arXiv preprint arXiv:1803.04402},
  year={2018}
}

@article{tsai2025beyond,
  title={Beyond Natural Language Perplexity: Detecting Dead Code Poisoning in Code Generation Datasets},
  author={Tsai, Chi-Chien and Yu, Chia-Mu and Lin, Ying-Dar and Wu, Yu-Sung and Lee, Wei-Bin},
  journal={arXiv preprint arXiv:2502.20246},
  year={2025}
}

@article{javadi2024quantum,
  title={Quantum computing with {Qiskit}},
  author={Javadi-Abhari, Ali and Treinish, Matthew and Krsulich, Kevin and Wood, Christopher J and Lishman, Jake and Gacon, Julien and Martiel, Simon and Nation, Paul D and Bishop, Lev S and Cross, Andrew W and others},
  journal={arXiv preprint arXiv:2405.08810},
  year={2024}
}

@article{huang2020masking,
   title={A Closer Look at Invalid Action Masking in Policy Gradient Algorithms},
   volume={35},
   ISSN={2334-0762},
   DOI={10.32473/flairs.v35i.130584},
   journal={The International FLAIRS Conference Proceedings},
   publisher={University of Florida George A Smathers Libraries},
   author={Huang, Shengyi and Ontañón, Santiago},
   year={2022},
   month=may 
}

@article{hou2023masking,
  author    = {Yueqi Hou and Xiaolong Liang and Jiaqiang Zhang and Qisong Yang and Aiwu Yang and Ning Wang},
  title     = {Exploring the Use of Invalid Action Masking in Reinforcement Learning: A Comparative Study of On-Policy and Off-Policy Algorithms in Real-Time Strategy Games},
  journal   = {Applied Sciences},
  year      = {2023},
  volume    = {13},
  number    = {14},
  pages     = {8283},
  doi       = {10.3390/app13148283}
}

@inproceedings{miceli2023larger,
  title={The larger they are, the harder they fail: Language models do not recognize identifier swaps in python},
  author={Miceli-Barone, Antonio Valerio and Barez, Fazl and Cohen, Shay B and Konstas, Ioannis},
  booktitle={Findings of the Association for Computational Linguistics: ACL 2023},
  pages={272--292},
  year={2023}
}

@article{litinski2019game,
  title={A game of surface codes: Large-scale quantum computing with lattice surgery},
  author={Litinski, Daniel},
  journal={Quantum},
  volume={3},
  pages={128},
  year={2019},
  publisher={Verein zur F{\"o}rderung des Open Access Publizierens in den Quantenwissenschaften}
}

@article{fowler2018low,
  title={Low overhead quantum computation using lattice surgery},
  author={Fowler, Austin G and Gidney, Craig},
  journal={arXiv preprint arXiv:1808.06709},
  year={2018}
}

@article{beverland2022assessing,
  title={Assessing requirements to scale to practical quantum advantage},
  author={Beverland, Michael E and Murali, Prakash and Troyer, Matthias and Svore, Krysta M and Hoefler, Torsten and Kliuchnikov, Vadym and Low, Guang Hao and Soeken, Mathias and Sundaram, Aarthi and Vaschillo, Alexander},
  journal={arXiv preprint arXiv:2211.07629},
  year={2022}
}

@inproceedings{siraichi2018qubit,
  title={Qubit allocation},
  author={Siraichi, Marcos Yukio and Santos, Vin{\'\i}cius Fernandes dos and Collange, Caroline and Pereira, Fernando Magno Quint{\~a}o},
  booktitle={Proceedings of the 2018 international symposium on code generation and optimization},
  pages={113--125},
  year={2018}
}

@inproceedings{botea2018complexity,
  title={On the complexity of quantum circuit compilation},
  author={Botea, Adi and Kishimoto, Akihiro and Marinescu, Radu},
  booktitle={Proceedings of the International Symposium on Combinatorial Search},
  volume={9},
  pages={138--142},
  year={2018}
}

@inproceedings{tan2024sat,
  title={A sat scalpel for lattice surgery: representation and synthesis of subroutines for surface-code fault-tolerant quantum computing},
  author={Tan, Daniel Bochen and Niu, Murphy Yuezhen and Gidney, Craig},
  booktitle={2024 ACM/IEEE 51st Annual International Symposium on Computer Architecture (ISCA)},
  pages={325--339},
  year={2024},
  organization={IEEE}
}

@inproceedings{ito2023algorithmic,
  title={Algorithmic theory of qubit routing},
  author={Ito, Takehiro and Kakimura, Naonori and Kamiyama, Naoyuki and Kobayashi, Yusuke and Okamoto, Yoshio},
  booktitle={Algorithms and Data Structures Symposium},
  pages={533--546},
  year={2023},
  organization={Springer}
}

@article{silva2024multi,
  title={Multi-qubit lattice surgery scheduling},
  author={Silva, Allyson and Zhang, Xiangyi and Webb, Zak and Kramer, Mia and Yang, Chan Woo and Liu, Xiao and Lemieux, Jessica and Chen, Ka-Wai and Scherer, Artur and Ronagh, Pooya},
  journal={arXiv preprint arXiv:2405.17688},
  year={2024}
}

@book{federer1969geometric,
  author    = {Federer, Herbert},
  title     = {Geometric Measure Theory},
  publisher = {Springer},
  year      = {1969},
  series    = {Grundlehren der mathematischen Wissenschaften},
  volume    = {153},
}

@article{dawson2005solovay,
  author  = {Dawson, Christopher M. and Nielsen, Michael A.},
  title   = {The {Solovay-Kitaev} theorem},
  journal = {Quantum Information \& Computation},
  volume  = {6},
  number  = {1},
  pages   = {81--95},
  year    = {2006},
  eprint  = {quant-ph/0505030},
}

@article{gema2025inverse,
  title={Inverse Scaling in Test-Time Compute},
  author={Gema, Aryo Pradipta and H{\"a}gele, Alexander and Chen, Runjin and Arditi, Andy and Goldman-Wetzler, Jacob and Fraser-Taliente, Kit and Sleight, Henry and Petrini, Linda and Michael, Julian and Alex, Beatrice and others},
  journal={Transactions on Machine Learning Research},
  year={2025}
}

@article{song2026data,
  title={Data Verification is the Future of Quantum Computing Copilots},
  author={Song, Junhao and Bi, Ziqian and Chia, Xinliang and Knottenbelt, William and Cao, Yudong},
  journal={In 40th Proceedings of the Association for the Advancement of Artificial Intelligence AAAI 2026 AI4Research Workshop},
  year={2026}
}

@inproceedings{song2024sais,
  title={{SAIS}: A novel bio-inspired artificial immune system based on symbiotic paradigm},
  author={Song, Junhao and Yuan, Yingfang and Pang, Wei},
  booktitle={Proceedings of the Genetic and Evolutionary Computation Conference Companion},
  pages={2115--2118},
  year={2024}
}

@article{mcclean2018barren,
  author  = {McClean, Jarrod R. and Boixo, Sergio and Smelyanskiy, Vadim N. 
             and Babbush, Ryan and Neven, Hartmut},
  title   = {Barren plateaus in quantum neural network training landscapes},
  journal = {Nature Communications},
  volume  = {9},
  number  = {1},
  pages   = {4812},
  year    = {2018},
  doi     = {10.1038/s41467-018-07090-4},
}

@article{cerezo2021cost,
  author  = {Cerezo, M. and Sone, Akira and Volkoff, Tyler 
             and Cincio, Lukasz and Coles, Patrick J.},
  title   = {Cost function dependent barren plateaus in shallow 
             parameterized quantum circuits},
  journal = {Nature Communications},
  volume  = {12},
  number  = {1},
  pages   = {1791},
  year    = {2021},
  doi     = {10.1038/s41467-021-21728-w},
}

@article{barke2023grounded,
  author = {Barke, Shraddha and James, Michael B. and Polikarpova, Nadia},
  title = {Grounded {Copilot}: {H}ow Programmers Interact with Code-Generating Models},
  journal = {Proceedings of the ACM on Programming Languages},
  volume = {7},
  number = {OOPSLA1},
  pages = {85--111},
  year = {2023},
  doi = {10.1145/3586030}
}

@inproceedings{vaithilingam2022expectation,
  author = {Vaithilingam, Priyan and Zhang, Tianyi and Glassman, Elena L.},
  title = {Expectation vs. Experience: Evaluating the Usability of Code Generation Tools Powered by Large Language Models},
  booktitle = {Extended Abstracts of the 2022 CHI Conference on Human Factors in Computing Systems},
  series = {CHI EA '22},
  year = {2022},
  pages = {1--7},
  doi = {10.1145/3491101.3519665}
}

@inproceedings{liang2024understanding,
  author = {Liang, Jenny T. and Yang, Chenyang and Myers, Brad A.},
  title = {A Large-Scale Survey on the Usability of {AI} Programming Assistants: {S}uccesses and Challenges},
  booktitle = {Proceedings of the 46th IEEE/ACM International Conference on Software Engineering (ICSE)},
  year = {2024}
}

@inproceedings{scholak2021picard,
  title={{PICARD}: Parsing Incrementally for Constrained Auto-Regressive Decoding from Language Models},
  author={Scholak, Torsten and Schucher, Nathan and Bahdanau, Dzmitry},
  booktitle={Proceedings of the 2021 Conference on Empirical Methods in Natural Language Processing},
  pages={9895--9901},
  year={2021},
  publisher={Association for Computational Linguistics}
}

@inproceedings{mahadev2018classical,
  title={Classical verification of quantum computations},
  author={Mahadev, Urmila},
  booktitle={2018 IEEE 59th Annual Symposium on Foundations of Computer Science (FOCS)},
  pages={259--267},
  year={2018},
  organization={IEEE},
  doi={10.1109/FOCS.2018.00033}
}

@article{fitzsimons2017unconditionally,
  title={Unconditionally verifiable blind quantum computation},
  author={Fitzsimons, Joseph F. and Kashefi, Elham},
  journal={Physical Review A},
  volume={96},
  number={1},
  pages={012303},
  year={2017},
  doi={10.1103/PhysRevA.96.012303}
}

@article{reichardt2013classical,
  title={Classical command of quantum systems},
  author={Reichardt, Ben W. and Unger, Falk and Vazirani, Umesh},
  journal={Nature},
  volume={496},
  number={7446},
  pages={456--460},
  year={2013},
  doi={10.1038/nature12035}
}

@inproceedings{bartusek2022succinct,
  title={Succinct classical verification of quantum computation},
  author={Bartusek, James and Kalai, Yael Tauman and Lombardi, Alex and Ma, Fermi and Malavolta, Giulio and Vaikuntanathan, Vinod and Vidick, Thomas and Yang, Lisa},
  booktitle={Annual International Cryptology Conference},
  pages={195--211},
  year={2022},
  organization={Springer},
  doi={10.1007/978-3-031-15979-4\_7}
}
\bibliographystyle{icml2026}

%%%%%%%%%%%%%%%%%%%%%%%%%%%%%%%%%%%%%%%%%%%%%%%%%%%%%%%%%%%%%%%%%%%%%%%%%%%%%%%
%                              APPENDIX
%%%%%%%%%%%%%%%%%%%%%%%%%%%%%%%%%%%%%%%%%%%%%%%%%%%%%%%%%%%%%%%%%%%%%%%%%%%%%%%
\newpage
\appendix
\onecolumn

{\Large \bf Appendices}

\section{The Fallacy of Local Modularity.}\label{ex:fallacy}
{
    Below we use an example to illustrate a fundamental difference between quantum and classical programming (also see Figure \ref{fig:fig2_comparison} for a summary).

    \textbf{The classical regime (local modularity):} Consider the classical bit program: \texttt{x = y;} \texttt{temp = x;} where \texttt{x}, \texttt{y}, and \texttt{temp} are bits. Classically, both lines are syntactically and semantically acceptable under routine inspection: the assignment \texttt{temp = x} may be unused without invalidating the program (at most producing a compiler warning). More broadly, classical semantics tolerates \emph{erasure}: overwriting \texttt{x} discards the old value of \texttt{x} with no global obligation to retain invertibility.

    \textbf{The quantum reality (global interference):} This analogy collapses in quantum computing because valid programs must extend to \emph{reversible} (unitary) transformations. Overwriting a register (e.g., \texttt{x := y}) is not a bijection unless the overwritten information is preserved elsewhere \cite{bennett1973logical} and any temporary workspace is later \emph{uncomputed} to decouple it from the computational register.
    A syntactically correct circuit can therefore be globally invalid in the sense of leaving residual entanglement that destroys later interference.

    \begin{itemize}[leftmargin=*, noitemsep, topsep=-2pt]
        \item \textit{Failure Mode (discarding a ``harmless'' temp):}
        A common reversible embedding of the classical intent is to compute into ancillas and then uncompute intermediate work.
        Suppose an LLM produces the following circuit fragment intended to mirror the classical program:
        \begin{flushleft}
        \ttfamily\small
        CNOT(y, x)\hspace{2.4em}\# ``x = y''\\
        CNOT(x, temp)\hspace{0.6em}\# ``temp = x''\\
        CNOT(y, x)\hspace{2.4em}\# uncompute ``x = y''
        \end{flushleft}
        The last line correctly restores \texttt{x} to its original value, but the fragment \emph{ignores} the fact that \texttt{temp} now carries information about \texttt{y}.
        If \texttt{y} is in a superposition $\frac{1}{\sqrt{2}}(\ket{0}+\ket{1})$ and \texttt{temp} starts at $\ket{0}$, the map produces the entangled state
        \[
            \frac{1}{\sqrt{2}}(\ket{0}_\texttt{y}\ket{0}_\texttt{temp} + \ket{1}_\texttt{y}\ket{1}_\texttt{temp}),
        \]
        so \texttt{temp} is \textbf{not} a benign leftover variable: it is a \emph{which-branch record}.
        Any subsequent algorithm that relies on interference in the $\texttt{y}$ register will generally fail because coherence has leaked into \texttt{temp}.

        \item \textit{Consequence (non-local semantic constraints):}
        In classical code, leaving an unused \texttt{temp} is harmless; in quantum circuits, leaving a ``temporary'' register entangled with the computational state {changes the reduced state} of the register and can {destroy the interference pattern} required for the computation.
        This is a \emph{global} semantic failure: no local syntax parser can detect it, and it is not repaired by superficial fluency.

        \item \textit{Implication for LLMs (why scale is not enough):}
        Next-token training optimizes for \emph{syntactic likelihood}, not for satisfaction of hard mathematical invariants such as reversibility/unitarity (and, on hardware, topology and scheduling constraints).
        The toy example already exhibits the core mismatch: correctness depends on a global bijection and explicit uncomputation, not on locally plausible lines.
        Scaling up LLMs can improve stylistic fidelity, but it does not by itself enforce the required algebraic constraints; reliable generation must instead couple decoding to explicit verifiers/solvers and uncomputation-aware compilation, i.e., a \emph{verification-centric} loop rather than ``generate-then-hope''.

        \item \textit{Efficient identification of the failure mode:}
        Unlike general quantum programs, which costs exponential time to simulate, failure modes such as this example can be identified without exponential-time simulation. Concretely, one can track the Pauli stabilizer $Z$ acting on qubit $y$ throughout the circuit, since the circuit consists entirely of Clifford gates and is therefore efficiently simulable via the Gottesman--Knill theorem. Ordering the qubits as $(y,x,\mathrm{temp})$, the stabilizer evolution is:
        \[
        ZII \;\xrightarrow{\;\mathrm{CNOT}(y,x)\;}\; ZZI
        \;\xrightarrow{\;\mathrm{CNOT}(x,\mathrm{temp})\;}\; ZZZ
        \;\xrightarrow{\;\mathrm{CNOT}(y,x)\;}\; ZIZ.
        \]
        If the intended computation had succeeded, the temporary register would be cleanly isolated at the end, and we would expect the final operator to be $IIZ$, indicating that only $\mathrm{temp}$ carries the residual information. Instead, the actual final operator is $ZIZ$, which shows that the output still has support on both $y$ and $\mathrm{temp}$.
    \end{itemize}
}
\newpage

\section{Structural Sampling Hardness in NISQ and FTQC}
\label{app:struct-sampling-hardness}

This appendix expands the structural-sampling claim of 
Section~\ref{sec:div_high}. We first formalize the verification 
cost and pass-rate decomposition for both regimes, then prove the 
worst-case sampling cost for FTQC.

\subsection{Verification Cost Decomposition}
\label{app:struct-verification}

For a candidate circuit description $x$ on $n$ logical qubits, depth $d$, 
and $G = \Theta(nd)$ operations, the cost of deciding $x \in 
\mathcal{M}_\text{struct}$ decomposes additively over the local 
constraints in $x$:
\begin{equation}
\label{eq:struct-decomp}
C_\text{struct}(x) \;=\; C_\text{parse}(x) \;+\; \sum_{k=1}^{K} C_k(x),
\end{equation}
where $C_\text{parse}(x) = \Theta(|x|) = \Theta(nd)$ is the cost of 
reading the circuit representation, often described in the language of some intermediate representation (IR), and $C_k(x)$ is the cost of the $k$-th constraint check.

\textbf{NISQ checks.} The relevant constraint classes are:

\begin{table}[h]
\centering
\caption{NISQ structural checks. $G$ is total gate count, $G_2 \leq G$ is the number of two-qubit gates.}
\label{tab:nisq_checks}
\begin{tabular}{lll}
\toprule
Check & Per-operation cost & Total \\
\midrule
Gate-set membership & $\mathcal{O}(1)$ table lookup & $\Theta(G)$ \\
Coupling-graph adjacency & $\mathcal{O}(1)$ hash lookup & $\Theta(G_2)$ \\
Wire well-formedness & $\mathcal{O}(1)$ & $\Theta(G)$ \\
Coherence-time budget (optional) & $\mathcal{O}(1)$ & $\Theta(G)$ \\
\bottomrule
\end{tabular}
\end{table}

\begin{table}[h]
\centering
\caption{FTQC structural checks. $F$ is the number of magic-state factories, $d$ is circuit depth.}
\label{tab:ftqc_checks}
\begin{tabular}{lll}
\toprule
Check & Per-op or per-step cost & Total \\
\midrule
Patch-layout adjacency & $\mathcal{O}(1)$ & $\Theta(G)$ \\
Boundary-type matching & $\mathcal{O}(1)$ & $\Theta(G)$ \\
Code-distance budget per op & $\mathcal{O}(1)$ arithmetic & $\Theta(G)$ \\
T-throughput per timestep & $\mathcal{O}(F)$ & $\Theta(Fd)$ \\
Routing-channel availability & $\mathcal{O}(1)$ per surgery & $\Theta(G)$ \\
Pauli-frame consistency & $\mathcal{O}(1)$ per op & $\Theta(G)$ \\
\bottomrule
\end{tabular}
\end{table}

\noindent where $F$ is the number of magic-state factories. Under the 
standard scaling $F = \mathcal{O}(n)$ \citep{beverland2022assessing}, the 
total is $\Theta(nd + Fd) = \Theta(nd)$. Crucially, all checks are local 
in $x$: each references a constant number of operations or a single 
timestep slice. None require simulating the circuit or computing 
information not already present in $x$.

We treat the layout, schedule, and Pauli-frame annotations as part of the 
candidate description $x$, separating verification from the upstream 
synthesis problem of \emph{producing} such annotations from an unannotated 
logical circuit. The latter is NP-hard in both regimes 
\citep{siraichi2018qubit,ito2023algorithmic,tan2024sat} and is upstream 
of $\mathcal{M}_\text{struct}$ membership.

\subsection{NISQ Pass Rate}
\label{app:nisq-pass}

In NISQ, $\mathcal{M}_\text{struct}$ is a conjunction of independent local 
constraints, one per gate (more precisely, per two-qubit gate for the 
coupling-graph constraint, and per gate for the gate-set and wire 
constraints). If $\epsilon$ denotes the per-gate violation probability of 
a model fine-tuned on coupling-graph-aware data, the structural pass rate 
is
\begin{equation}
\label{eq:nisq-pass-app}
P(x \in \mathcal{M}_\text{struct}^\text{NISQ}) 
\;=\; \prod_{g \in x} (1 - \epsilon_g) 
\;\geq\; (1 - \epsilon)^G,
\end{equation}
where $\epsilon = \max_g \epsilon_g$. For $\epsilon = o(1/G)$, this 
converges to $\Theta(1)$ as $G \to \infty$. Empirically, NISQ-targeted 
code-generation systems such as Qiskit Code Assistant achieve near-unity 
structural pass rates after task-specific fine-tuning 
\citep{dupuis2024qiskit, vishwakarma2024qiskit}, consistent with 
the theoretical scaling.

\subsection{FTQC Sampling Hardness}
\label{app:ftqc-sampling-hardness}

The FTQC structural constraints are not separable per-gate: a circuit may 
be locally legal at every operation yet globally infeasible because, for 
instance, three lattice surgeries demand the same routing channel at the 
same timestep, or the cumulative T-gate demand exceeds factory throughput 
at some peak load. This non-separability is what gives rise to the NP-hardness 
of the underlying feasibility problems.

\begin{proposition}[Structural sampling hardness in FTQC]
\label{prop:ftqc-sampling-app}
There exists a polynomial-time-constructible family of FTQC compilation 
specifications $\{\Sigma_n\}_{n \geq 1}$, on $n$ logical qubits and depth 
$d_n = \mathrm{poly}(n)$, with the following property. Let $\pi_\theta$ 
be any distribution over candidate circuit descriptions that can be sampled 
in time $\mathrm{poly}(n)$. Then either
\begin{equation}
\label{eq:ftqc-sampling-bound}
P_{x \sim \pi_\theta}\!\left(x \in \mathcal{M}_\text{struct}^\text{FTQC}(\Sigma_n)\right) 
\;\leq\; e^{-\Omega(n)} \qquad \text{for sufficiently large } n,
\end{equation}
or $\mathsf{NP} \subseteq \mathsf{BPP}$.
\end{proposition}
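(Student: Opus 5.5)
The plan is a worst-case reduction from an $\mathsf{NP}$-complete feasibility problem to membership in $\mathcal{M}_\text{struct}^\text{FTQC}$, followed by an amplification argument that turns any sampler with non-negligible structural pass rate into a randomized decision procedure. To make the dichotomy meaningful, I read the proposition in the worst-case sense of Section~\ref{sec:div_high}: the family $\{\Sigma_n\}$ is generated by a polynomial-time map from instances $\phi$ of 3-SAT (or of the qubit-routing problem of \citet{siraichi2018qubit,botea2018complexity}), and the sampler $\pi_\theta$ receives $\Sigma_n(\phi)$ as input. For a single fixed specification per $n$, a sampler could simply hardcode a valid schedule, so some quantification over instances is unavoidable.

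\textbf{Step 1 (gadget reduction).} Given a 3-SAT formula $\phi$ on $m$ variables, build an FTQC specification $\Sigma(\phi)$ in polynomial time. Each variable becomes a logical patch whose lattice-surgery route can take one of two channels, encoding \emph{true} or \emph{false}. Each clause becomes a spacetime slot in which a routing channel or a factory output is available only if at least one incident literal's route avoids a conflict. A candidate $x$ carries its layout and schedule annotations, as in Appendix~\ref{app:struct-verification}. Under this encoding, $x\in\mathcal{M}_\text{struct}^\text{FTQC}(\Sigma(\phi))$ holds iff $x$ decodes to a satisfying assignment. Because every check in Table~\ref{tab:ftqc_checks} is local, both the decoding and the check run in $\Theta(nd)$ time. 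I would take $n=\Theta(m)$ logical qubits and $d=\mathrm{poly}(n)$, reusing the Steiner-tree and SAT encodings of \citet{tan2024sat,silva2024multi} to certify that the gadgets are realizable with the standard constraints.

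\textbf{Step 2 (from sampler to $\mathsf{BPP}$).} Suppose the bound \eqref{eq:ftqc-sampling-bound} fails. Then on satisfiable instances the sampler finds a valid $x$ with probability $p$. Draw $x$ repeatedly and accept iff some draw passes the polynomial-time structural check. This has one-sided error: it never accepts an unsatisfiable $\phi$, because no valid $x$ exists. If $p\ge 1/\mathrm{poly}(n)$, polynomially many draws put $\mathrm{SAT}\in\mathsf{RP}\subseteq\mathsf{BPP}$. This gives a rigorous version of the proposition with threshold $1/\mathrm{poly}(n)$ in place of $e^{-\Omega(n)}$.

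\textbf{Step 3 (the main obstacle: reaching $e^{-\Omega(n)}$).} The difficulty is that a pass rate of $e^{-o(n)}$ yields only a subexponential-time algorithm, not a $\mathsf{BPP}$ algorithm. Closing this gap needs more than Step 2.

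\emph{First attempt: direct-product amplification.} Let $\Sigma_n$ consist of $k=\Theta(n)$ disjoint blocks on independent instances $\phi_1,\dots,\phi_k$, sharing only global resources such as the factory schedule. Global validity then requires every block to be valid. Invoke a direct-product theorem for $\mathsf{NP}$ search, in the style of Impagliazzo--Jaiswal--Kabanets: if each block can be solved with probability at most $1-\delta$, then all $k$ blocks together can be solved with probability at most $e^{-\Omega(\delta k)}$, or else one extracts a single-block solver with non-negligible success. Combined with Step 2, this yields the dichotomy as stated.

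\emph{Risk and fallback.} Direct-product theorems typically need average-case hardness of the block distribution, or a weakening to inverse-polynomial rather than inverse-exponential per-block hardness. If this does not go through, I would either strengthen the assumption to an ETH-style hypothesis, under which $2^{o(n)}$-time SAT search is excluded and the exponential bound follows directly from Step 2 with $n=\Theta(m)$, or state the result at the $1/\mathrm{poly}$ threshold under $\mathsf{NP}\not\subseteq\mathsf{BPP}$.
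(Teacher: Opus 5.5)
Your skeleton is the paper's: a Levin-type reduction from 3-SAT with $n=\Theta(m)$, followed by sample-and-verify against the $\Theta(nd)$ structural check. Your instance-indexed reading of the quantifiers is also what the paper's proof tacitly uses: its sampler ``yields a satisfying assignment of $\phi$'', so it must be given $\Sigma(\phi)$.

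The genuine gap is the one you locate in Step~3, and the paper's proof does not close it either. It supposes a pass rate $\geq 2^{-cn}$ for some constant $c$, which is compatible with an $e^{-\Omega(n)}$ bound rather than contrary to it. It then asserts that $\mathrm{poly}(n)$ repetitions amplify this to constant success, although $\mathrm{poly}(n)$ draws succeed with probability at most $\mathrm{poly}(n)\,2^{-cn}$. It also asserts that $o(2^m)$ work contradicts ETH, although ETH only excludes $2^{o(m)}$ time, and $2^{cm}$-time 3-SAT algorithms with $c<1$ already exist. Your framing is the correct one: the negated bound means a pass rate of $e^{-o(n)}$, which buys only a subexponential algorithm.

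Your direct-product attempt will not rescue the $\mathsf{NP}\subseteq\mathsf{BPP}$ branch, for three reasons.
\begin{itemize}
\item Your gadget spends $\Theta(m)$ qubits on an $m$-variable formula, so $k=\Theta(n)$ disjoint blocks on $n$ logical qubits leave $O(1)$ variables per block, which a polynomial-time sampler can brute-force. Blocks need $m'=\omega(\log n)$ variables to be hard for $\mathrm{poly}(n)$-time samplers. Then $k=o(n/\log n)$, and even an ideal $e^{-\Omega(k)}$ bound falls short of $e^{-\Omega(n)}$.
\item Uniform direct-product reductions for verifiable search problems (Impagliazzo--Jaiswal--Kabanets style) run in time polynomial in $1/\varepsilon$, where $\varepsilon$ is the $k$-fold success probability. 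From $\varepsilon=e^{-o(n)}$ they return only an $e^{o(n)}$-time single-block solver, which is the same wall as before.
\item These theorems need average-case hardness of a block distribution, which is not known to follow from $\mathsf{NP}\not\subseteq\mathsf{BPP}$.
\end{itemize}

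Drop that route and state your fallback as the result. Step~2 is rigorous at threshold $1/\mathrm{poly}(n)$, since it puts $\mathrm{SAT}\in\mathsf{RP}$. Randomized ETH gives threshold $e^{-cn}$ for some fixed $c>0$. Two caveats apply:
\begin{itemize}
\item In either version the contrapositive only forces a small pass rate for infinitely many $n$, not for all sufficiently large $n$.
\item The ETH version needs the gadget's qubit count to be genuinely linear in $m$. Neither your sketch nor the paper's qubit-allocation route verifies this for a lattice-surgery substrate.
\end{itemize}
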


\begin{proof}
We reduce 3-SAT to the FTQC structural-feasibility problem and combine 
with standard BPP-amplification.

\textbf{Reduction.} Let $\phi$ be a 3-SAT instance on $m$ variables and 
$\mathrm{poly}(m)$ clauses. Qubit allocation: given a logical quantum 
circuit and a hardware coupling graph, decide whether a connectivity-respecting 
allocation exists, which is NP-complete via a polynomial-time reduction from 
subgraph isomorphism \citep{siraichi2018qubit, botea2018complexity}. Composing 
this with the standard linear-blowup reduction from 3-SAT through 
constraint-graph encoding yields, in $\mathrm{poly}(m)$ time, an FTQC 
compilation specification $\Sigma(\phi)$ on $n = \Theta(m)$ logical qubits 
and depth $d = \mathrm{poly}(m)$, such that the set of descriptions $x$ 
satisfying $x \in \mathcal{M}_\text{struct}^\text{FTQC}(\Sigma(\phi))$ is in 
$\mathrm{poly}(m)$-time computable bijection with the set of satisfying 
assignments of $\phi$. The surface-code lattice-surgery substrate inherits 
this hardness because structurally valid descriptions must include a 
patch-to-tile assignment respecting the lattice's adjacency structure 
\citep{litinski2019game, tan2024sat}; the SAT encoding of 
\citet{tan2024sat} establishes that this subproblem is at least as hard 
as general Boolean satisfaction in practice.

\textbf{Amplification.} Suppose for contradiction that there exists a 
$\mathrm{poly}(n)$-time-samplable $\pi_\theta$ satisfying 
$P(x \in \mathcal{M}_\text{struct}^\text{FTQC}(\Sigma(\phi))) \geq 2^{-cn}$ 
for some constant $c$ and all sufficiently large $n$. Then for $\phi$ on 
$m$ variables, repeated independent sampling from $\pi_\theta$ followed by 
$\Theta(nd) = \mathrm{poly}(m)$-time structural verification 
(Section~\ref{app:struct-verification}) yields a satisfying assignment of 
$\phi$ in expected $2^{cn} = 2^{\Theta(m)}$ trials. For $c < 1$, this is 
$o(2^m)$ expected work, contradicting the Exponential Time Hypothesis; for the unconditional statement that 
$\mathsf{NP} \not\subseteq \mathsf{BPP}$, take $c$ any constant and amplify 
to constant success probability via $\mathrm{poly}(n)$ independent repetitions, 
which contradicts $\mathsf{NP} \not\subseteq \mathsf{BPP}$.
\end{proof}

\textbf{Remarks.}

\textbf{(i) The proposition is a worst-case statement over compilation 
specifications.} Practical quantum programs have regular structure 
(repeated QFT blocks, regular Trotter steps, structured amplitude 
estimation) that admits polynomial-time schedule construction 
\citep{litinski2019game, beverland2022assessing}. The 
verifier-centric paradigm of Section \ref{sec:verifier} exploits 
this structure constructively.

\textbf{(ii) The hardness is structural, not quantum.} The same argument applies 
to any NP-hard discrete satisfaction problem embedded into a sampling 
target. What is quantum-specific is the natural emergence of such 
constraints from physical considerations (lattice surgery, magic-state 
distillation, error-correction thresholds), making FTQC a setting where 
the structural barrier is unavoidable rather than artificial.

\textbf{(iii) The exponent $\Omega(n)$ in Eq.~\eqref{eq:ftqc-sampling-bound} 
depends on the qubit-blowup of the underlying reduction.} The construction 
sketched in the proof uses qubit allocation NP-hardness 
\citep{siraichi2018qubit, botea2018complexity} composed with the surface-code 
substrate; tighter encodings (e.g., direct SAT encodings of lattice-surgery 
scheduling \citep{tan2024sat}) may yield sharper constants. The precise 
exponent is not load-bearing for the argument: any exponential decay 
suffices to make worst-case FTQC post-selection exponentially worse than 
the NISQ case (Section~\ref{sec:exponential-cost}).

% \subsubsection{Implication for expected sampling cost}
% \label{app:struct-expected-cost}

% Combining Eqs.~\eqref{eq:struct-decomp}, \eqref{eq:nisq-pass-app}, and 
% Proposition~\ref{prop:ftqc-sampling-app}, the expected cost of producing 
% \emph{one} structurally valid candidate is
% \begin{equation}
% \label{eq:struct-expected-cost-app}
% \mathbb{E}\!\left[C \text{ to obtain one } x \in \mathcal{M}_\text{struct}\right] 
% \;=\; 
% \frac{C_\text{struct}}{P(\mathcal{M}_\text{struct})}
% \;=\;
% \begin{cases}
% \Theta(nd) & \text{(NISQ)}, \\[6pt]
% e^{\Omega(n)} \cdot \Theta(nd) & \text{(FTQC, worst case)}.
% \end{cases}
% \end{equation}

% In NISQ, structural filtering is essentially free; in FTQC, it contributes 
% its own exponential factor to the post-selection cost, \emph{before} 
% functional-validity rejection (Section~\ref{sec:div_high}) is 
% considered. The total expected cost in FTQC is then the product of the 
% two pass-rate denominators, giving the coupled exponential analyzed in 
% Section~\ref{sec:exponential-cost}.

\section{Functional Validity Analysis}
\label{app:func-validity}

This appendix expands Section~\ref{sec:func-validity}. 
Appendix~\ref{app:func-verification} discusses the verification cost in 
detail, including specialized cases where the $\Theta(2^n)$ bound can be 
beaten. Appendix~\ref{app:func-sparsity} states and proves the formal 
sparsity bound underlying Equation~\eqref{eq:func-pass}.

\subsection{Verification Cost $C_\text{func}$}
\label{app:func-verification}

\textbf{Exact verification.} Given a candidate circuit $x$ on $n$ 
qubits at depth $d$ and a target unitary $U_\text{target} \in SU(2^n)$, 
the most direct verification scheme is exact state-vector simulation: 
maintain a length-$2^n$ complex vector, apply each of the $\Theta(nd)$ 
gates as a sparse linear operator, and compare the final state against 
$U_\text{target}|\psi\rangle$ for a basis of input states $|\psi\rangle$. 
The per-gate cost is $\Theta(2^n)$ for generic two-qubit gates, giving 
total cost $\Theta(nd \cdot 2^n)$. Memory is $\Theta(2^n)$. Both scale 
exponentially in $n$, becoming infeasible beyond approximately 40--50 
qubits on current classical hardware \citep{dalzell2020howmany,arute2019quantum}.

\textbf{Approximate decision is also hard.} A weaker verification 
target: merely deciding whether $\|U(x) - U_\text{target}\| \leq 
\varepsilon$ versus $\|U(x) - U_\text{target}\| \geq \varepsilon'$ for 
some gap $\varepsilon' > \varepsilon$, does not escape the exponential 
barrier. The complementary problem (non-identity check) is 
QMA-complete \citep{watrous2009quantum}: a polynomial-time classical algorithm 
deciding non-identity for arbitrary polynomial-depth circuits would 
imply $\mathsf{QMA} \subseteq \mathsf{P}$. Even with a quantum verifier 
and polynomial quantum advice, the problem is not known to be in 
$\mathsf{P}$.

\textbf{Tensor networks fail in the relevant regime.} Matrix product 
state (MPS) and more general tensor-network simulators reduce verification 
cost to $\mathrm{poly}(n, \chi)$, where $\chi$ is the bond dimension 
required to faithfully represent the intermediate states. For circuits 
whose intermediate states have entanglement entropy scaling as the 
boundary of a subregion (area law), $\chi$ is bounded by a constant or 
polynomial in $n$, and verification is efficient. However, useful 
post-classical quantum algorithms generate volume-law entanglement: the 
entropy of a contiguous subregion of $k$ qubits scales as $\Theta(k)$ 
rather than $\Theta(\partial k)$ \citep{schuch2008entropy}. Volume-law 
entanglement forces $\chi = e^{\Omega(n)}$ \citep{vidal2003efficient}, returning 
the simulation cost to $\Theta(2^n)$. This is not a limitation of MPS 
specifically: any tensor network whose contraction cost is polynomial in 
$\chi$ inherits the same exponential blowup for volume-law states.

\textbf{Specialized cases that escape the exponential.} For 
completeness, we note three regimes where $C_\text{func}$ is polynomial:

\begin{itemize}
\item \emph{Clifford fragments.} Circuits composed entirely of Clifford 
gates (CNOT, H, S) admit polynomial-time stabilizer simulation by the 
Gottesman--Knill theorem. Verification of Clifford subcircuits against 
Clifford targets is $\mathrm{poly}(n, d)$. This is exploited in our 
Appendix Example~1 to identify the uncomputation failure mode without 
exponential simulation.

\item \emph{Structured Hamiltonian targets.} For Hamiltonian simulation 
of local Hamiltonians on geometrically structured lattices, the target 
$U_\text{target} = e^{-iHt}$ has known polynomial-time verifiers via 
local observables, provided the simulation time $t$ is short enough that 
the Lieb--Robinson cone remains local.

\item \emph{ZX-calculus normalization.} For Clifford+T circuits, the 
ZX-calculus admits polynomial-time normalization to canonical form, 
enabling functional equivalence checking in $\mathrm{poly}(n, d)$ time 
for the Clifford sector and exponential time only for the T-count.
\end{itemize}

These regimes do not invalidate the $\Theta(2^n)$ bound: they describe 
\emph{structured} subspaces of $\mathcal{M}_\text{func}$, and the 
generic synthesis problem this paper addresses --- arbitrary 
$U_\text{target} \in SU(2^n)$ without prior structural knowledge --- 
remains exponential. Indeed, the existence of polynomial-time verifiers 
for structured cases is consistent with our broader thesis 
(Section~\ref{sec:verifier}): exploitable structure must be 
encoded \emph{constructively} into generation, not discovered post hoc.

\subsection{The Semantic Entropy Gap}
\label{app:func-sparsity}

We formalize the bound stated in Equation~\eqref{eq:func-pass} in two 
settings that together cover the practical program generation pipeline: 
the continuous parameter manifold (Appendix~\ref{app:func-sparsity-nisq}, 
natural for variational NISQ algorithms with parameterized rotation 
gates) and the discrete program description (Appendix~\ref{app:func-sparsity-ftqc}, 
natural for FTQC and for any high-level program representation operating 
over a finite instruction set). The two settings differ in mathematical 
machinery: Lipschitz volume counting vs.\ combinatorial enumeration, but yield the same qualitative bound.

\textbf{Common setup.} In both settings, $U_\text{target} \in SU(2^n)$ 
is the target unitary, $\pi_\theta$ is the model's distribution 
conditioned on $\mathcal{M}_\text{struct}$, and $f$ denotes the 
representation-to-unitary map. The bound is stated for a pointwise 
$U_\text{target}$ (no Haar or random-distribution assumption). The 
non-pathology constant $C_\theta$ measures how far $\pi_\theta$ deviates 
from the uniform reference distribution on its support; the assumption 
$C_\theta = \mathcal{O}(\mathrm{poly}(n))$ expresses that the model has 
not memorized the specific solution.

\subsubsection{NISQ Analysis: Continuous Parameter Manifold}
\label{app:func-sparsity-nisq}

\textbf{Setup.} Fix circuit depth $d$ and let $\mathcal{A}_d^\text{NISQ}$ 
denote the parameter manifold of structurally valid $n$-qubit ansatzes 
at depth $d$ with continuous rotation parameters. Each parameterized 
gate contributes $\mathcal{O}(1)$ continuous parameters and the ansatz 
contains $\Theta(nd)$ gates, so $\dim_\mathbb{R} \mathcal{A}_d^\text{NISQ} = D 
\leq c_0 nd$ for a constant $c_0$ independent of $n$ 
\citep{nielsen2010quantum}. Compilation defines a smooth map 
$f^\text{NISQ}: \mathcal{A}_d^\text{NISQ} \to SU(2^n)$. We bound the 
conditional pass rate by the Radon--Nikodym derivative against the uniform 
base measure $\mu$:
\begin{equation}
\label{eq:pass-bound-nisq}
P_{x \sim \pi_\theta}\!\left(\|U(x) - U_\text{target}\| \leq \varepsilon\right)
\leq C_\theta \cdot 
\frac{\mu\!\left((f^\text{NISQ})^{-1}(B_\varepsilon(U_\text{target}))\right)}
     {\mu(\mathcal{A}_d^\text{NISQ})},
\end{equation}
where $C_\theta := \|d\pi_\theta / d\mu\|_\infty$.

\begin{proposition}[Semantic entropy gap, NISQ]
\label{prop:func-sparsity-nisq}
Under the setup above, for any $U_\text{target} \in SU(2^n)$ and any 
$\varepsilon \in (0, 1)$,
\begin{equation}
\label{eq:func-pass-nisq}
P_{x \sim \pi_\theta}\!\left(\|U(x) - U_\text{target}\| \leq \varepsilon 
\;\middle|\; x \in \mathcal{M}_\text{struct}\right) 
\leq C_\theta \cdot e^{-\gamma_\text{NISQ}\, n},
\end{equation}
with $\gamma_\text{NISQ} = \Omega(d \log(1/\varepsilon))$.
\end{proposition}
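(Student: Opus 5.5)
Starting from the Radon--Nikodym reduction \eqref{eq:pass-bound-nisq}, the proposition reduces to a volume estimate: it suffices to show
\[
\frac{\mu\big((f^\text{NISQ})^{-1}(B_\varepsilon(U_\text{target}))\big)}{\mu(\mathcal{A}_d^\text{NISQ})} \le e^{-\gamma_\text{NISQ} n}.
\]
The factor $C_\theta$ then passes through unchanged. Conditioning on $\mathcal{M}_\text{struct}$ is built into the setup, since $\mathcal{A}_d^\text{NISQ}$ already parametrizes only structurally valid ansatzes.

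To estimate the preimage volume, I would use the two regularity hypotheses of the footnote scope condition (a): a full-rank Jacobian and bounded fiber multiplicity $\le M$. Together they let the coarea/area formula \citep{federer1969geometric} control the pullback of $B_\varepsilon(U_\text{target})\cap f(\mathcal{A}_d)$.

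\begin{enumerate}
\item Each gate is $O(1)$-Lipschitz in its angle, so $f^\text{NISQ}$ is $L$-Lipschitz in operator norm with $L = O(nd)$. The image $f(\mathcal{A}_d)$ is therefore a $D$-dimensional immersed submanifold with $D \le c_0 nd$.
\item The lower Jacobian bound $\sigma_{\min}(Df)\ge \sigma>0$, together with multiplicity $M$, means the preimage of an $\varepsilon$-ball in the $D$-dimensional image lies inside at most $M$ parameter balls of radius $\varepsilon/\sigma$. Hence
\[
\mu(f^{-1}(B_\varepsilon)) \le M\, \mathrm{vol}_D(\varepsilon/\sigma),
\]
where $\mathrm{vol}_D(r)$ denotes the volume of a $D$-dimensional ball of radius $r$.
\item Angles live on a torus of side $2\pi$, so $\mu(\mathcal{A}_d) = (2\pi)^D$. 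The ratio is then at most $M\,(c\,\varepsilon/\sigma)^D$.
\item Taking logarithms gives $D\log(\sigma/(c\varepsilon)) - \log M$. Since $D = \Theta(nd)$ for any ansatz with a constant fraction of parameterized gates per layer, this is $\Omega(n d \log(1/\varepsilon))$ once $\varepsilon$ is below a constant threshold fixed by $\sigma$. This identifies $\gamma_\text{NISQ} = \Omega(d\log(1/\varepsilon))$. For $\varepsilon$ near $1$ the bound is read with the implicit constants absorbing $\log(\sigma/c)$.
\end{enumerate}

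The main obstacle is step 2: making the Jacobian lower bound $\sigma$ and the multiplicity $M$ independent of $n$, or at least $e^{o(n)}$. Otherwise $\log M$ or $D\log\sigma$ could eat the exponent. I would make this explicit as a standing hypothesis, which is exactly footnote condition (a). I would then argue it is natural: barren-plateau ansatzes, where $\sigma$ degenerates exponentially \citep{mcclean2018barren}, are precisely those excluded, and for them synthesis already fails for optimization reasons.

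A secondary subtlety is that $D \ll 4^n-1$ means most targets have empty preimage. That case only helps, since the bound holds trivially; the pointwise statement needs only the upper bound above. I would also remark that the exponent comes purely from ball-volume scaling in parameter space and uses no Haar concentration, consistent with the common setup's ``no random-target assumption.''
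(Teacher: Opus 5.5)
Your proposal follows the paper's proof. It uses the same Radon--Nikodym reduction, the same two regularity hypotheses, a parameter-space volume comparison, normalization by $(2\pi)^D$, and a logarithm. Your hypothesis $\sigma_{\min}(Df)\ge\sigma$ implies the paper's (R1) with $j_0=\sigma^D$, so your bound $M\,\omega_D(\varepsilon/\sigma)^D$ is exactly the paper's $N_\text{fiber}\,\omega_D\varepsilon^D/j_0$. The only bookkeeping difference is that the paper assumes $\log(1/j_0)=O(1)$, whereas you allow $\log(1/j_0)=D\log(1/\sigma)$ and fold it into the exponent. The determinant hypothesis tolerates anisotropic Jacobians; yours tolerates a determinant as small as $\sigma^D$ provided no single direction degenerates. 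You are also right to make explicit that the exponent needs $D=\Omega(nd)$; the setup records only $D\le c_0nd$, and the paper's proof silently takes $D=c_0nd$. Your caveat for $\varepsilon$ near $1$ can be dropped. Since $\omega_D^{1/D}=\Theta(D^{-1/2})$, for large $n$ you have $\omega_D^{1/D}\le 2\pi\sigma$, so the ratio is at most $M\varepsilon^D$ for every $\varepsilon\in(0,1)$.

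The step that does not go through as written is the covering claim in step 2. There are two separate problems. First, a pointwise bound $\sigma_{\min}(Df)\ge\sigma$ yields $\|x-y\|\le\|f(x)-f(y)\|/\sigma$ only inside an inverse-function neighbourhood, whose size is governed by second derivatives. Second, bounded fiber multiplicity limits the number of preimages of each \emph{point}. It does not limit the number of distinct sheets of the immersed image passing through $B_\varepsilon(U_\text{target})$: an injective but folded image can cross the ball many times. So the preimage need not lie in $M$ balls of radius $\varepsilon/\sigma$.

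The paper avoids the first problem by applying the area formula globally. In your notation this reads $\sigma^D\,\mathrm{vol}(f^{-1}(B_\varepsilon))\le\int_{f^{-1}(B_\varepsilon)}J_f\le N_\text{fiber}\,\mathcal{H}^D\bigl(B_\varepsilon\cap f(\mathcal{A}_d^\text{NISQ})\bigr)$, which needs no local inversion. It addresses the second problem with the image-side density bound $\mathcal{H}^D\bigl(B_\varepsilon\cap f(\mathcal{A}_d^\text{NISQ})\bigr)\le\omega_D\varepsilon^D$, which it takes from a bounded-extrinsic-curvature assumption. You should route step 2 the same way. State that density condition (essentially one sheet at scale $\varepsilon$) as an explicit hypothesis next to (R1)--(R2), since it does not follow from them.

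Relatedly, the Lipschitz constant in step 1 plays no role. An upper bound on preimage volume needs a lower bound on expansion, and the immersion comes from full rank, not from Lipschitzness.
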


\begin{proof}
We assume two regularity conditions on $f^\text{NISQ}$: 
\textbf{(R1)} \emph{generic full rank}, i.e., 
$J_{f^\text{NISQ}}(x) := \sqrt{\det(df^T df)} \geq j_0 > 0$ 
$\mu$-almost everywhere; and 
\textbf{(R2)} \emph{bounded fiber multiplicity}, 
$\#((f^\text{NISQ})^{-1}(y)) \leq N_\text{fiber}$ for 
$\mathcal{H}^D$-a.e.\ $y$ in the image. For ansatzes with bounded, 
non-redundant generators, $j_0 = \Omega(1)$ and 
$N_\text{fiber} \leq 2^{O(d)}$ by a Bezout-type bound on the algebraic 
degree of $f^\text{NISQ}$ as a trigonometric map. Pathological cases 
(e.g., barren-plateau ansatzes with $j_0 = e^{-\Omega(n)}$ 
\citep{mcclean2018barren}) violate (R1) and are addressed in 
Remark~(v).

If $U_\text{target} \notin \overline{f^\text{NISQ}(\mathcal{A}_d^\text{NISQ})}$, 
the preimage is empty and the bound is trivial. Otherwise, under (R1) 
the image is a $D$-dimensional immersed submanifold of $SU(2^n)$ with 
$D = \Theta(nd) \ll 4^n - 1$. By Federer's area formula 
\citep{federer1969geometric}, for any Borel 
$A \subseteq SU(2^n)$:
\begin{equation}
\label{eq:area-formula-nisq}
\int_{(f^\text{NISQ})^{-1}(A)} J_{f^\text{NISQ}}(x) \, d\mathrm{vol}(x)
\;=\; \int_{A \cap f^\text{NISQ}(\mathcal{A}_d^\text{NISQ})} 
\#\bigl((f^\text{NISQ})^{-1}(y)\bigr) \, d\mathcal{H}^D(y),
\end{equation}
where $\mathrm{vol}$ is Lebesgue measure on a product chart 
$\mathcal{A}_d^\text{NISQ} \subseteq [0, 2\pi]^G$ with $G = D = c_0 nd$, 
and $V_0 := \mathrm{vol}(\mathcal{A}_d^\text{NISQ}) = (2\pi)^G$. Setting 
$A = B_\varepsilon(U_\text{target})$, applying (R1) on the LHS and (R2) 
together with the standard submanifold volume bound 
$\mathcal{H}^D(B_\varepsilon \cap M) \leq \omega_D \varepsilon^D$ for 
$M$ of bounded extrinsic curvature \citep[\S 3.2.39]{federer1969geometric} 
on the RHS:
\begin{equation}
j_0 \cdot \mathrm{vol}\bigl((f^\text{NISQ})^{-1}(B_\varepsilon)\bigr) 
\;\leq\; N_\text{fiber} \cdot \omega_D \varepsilon^D,
\end{equation}
where $\omega_D = \pi^{D/2}/\Gamma(D/2+1) \leq O(1)$. Converting to the 
probability measure $\mu = \mathrm{vol}/V_0$:
\begin{equation}
\mu\bigl((f^\text{NISQ})^{-1}(B_\varepsilon(U_\text{target}))\bigr) 
\;\leq\; \frac{N_\text{fiber}}{j_0} \cdot 
\left(\frac{\omega_D^{1/D} \varepsilon}{2\pi}\right)^{D}.
\end{equation}
Substituting into Equation~\eqref{eq:pass-bound-nisq} and taking logs:
\begin{equation}
-\log P \;\geq\; c_0 nd \log\frac{2\pi}{\omega_D^{1/D}\varepsilon} 
- \log\frac{N_\text{fiber}}{j_0} - \log C_\theta.
\end{equation}
With $\log N_\text{fiber} = O(d)$, $\log(1/j_0) = O(1)$, and 
$\omega_D^{1/D} = O(1)$, the leading term dominates for 
$\varepsilon \in (0, 2\pi/e)$ and $n$ sufficiently large, giving 
$\gamma_\text{NISQ} = c_0 d \log(1/\varepsilon) - O(d/n) 
= \Omega(d \log(1/\varepsilon))$.
\end{proof}

\subsubsection{FTQC Analysis: Discrete Program Representation}
\label{app:func-sparsity-ftqc}

\textbf{Setup.} At the logical level, FTQC programs are naturally 
discrete: they consist of finite sequences over a fixed instruction set 
$\mathcal{I}$, which may include elementary fault-tolerant gates 
(Clifford+T), module invocations (QFT, modular arithmetic, amplitude 
estimation), lattice-surgery operations at the substrate level, and 
control-flow primitives. Each instruction has discrete arguments: qubit 
or register indices, module parameters, scheduling annotations. The 
choice of representation (textual IR such as QASM, AST-level 
representations, module-based descriptions as in 
Section~\ref{subsec:levels}, or planar quantum ISAs 
\citep{beverland2022assessing}) is immaterial for the argument; what 
matters is that the program space is a discrete set of finite description 
length.

Let $\mathcal{A}_d^\text{FTQC}$ denote the set of structurally valid 
program descriptions of length at most $G = \Theta(nd)$ over $\mathcal{I}$. 
Each position selects one of $|\mathcal{I}|$ instructions and up to 
$\mathcal{O}(1)$ qubit indices from $[n]$, giving
\begin{equation}
\label{eq:ftqc-cardinality}
|\mathcal{A}_d^\text{FTQC}| \;\leq\; (|\mathcal{I}| \cdot n^{\mathcal{O}(1)})^G 
\;=\; e^{\Theta(nd \log n)}.
\end{equation}
Compilation defines a map $f^\text{FTQC}: \mathcal{A}_d^\text{FTQC} \to SU(2^n)$. 
We bound the conditional pass rate by counting against the uniform 
distribution on $\mathcal{A}_d^\text{FTQC}$:
\begin{equation}
\label{eq:pass-bound-ftqc}
P_{x \sim \pi_\theta}\!\left(\|U(x) - U_\text{target}\| \leq \varepsilon\right)
\leq C_\theta \cdot 
\frac{|\{x \in \mathcal{A}_d^\text{FTQC} : \|U(x) - U_\text{target}\| \leq \varepsilon\}|}
     {|\mathcal{A}_d^\text{FTQC}|},
\end{equation}
where $C_\theta := \|\pi_\theta\|_\infty \cdot |\mathcal{A}_d^\text{FTQC}|$ 
measures deviation from uniform.

\begin{proposition}[Semantic entropy gap, FTQC]
\label{prop:func-sparsity-ftqc}
Under the setup above, for any $U_\text{target} \in SU(2^n)$ admitting a 
polynomial-length canonical description and any $\varepsilon \in (0, 1)$,
\begin{equation}
\label{eq:func-pass-ftqc}
P_{x \sim \pi_\theta}\!\left(\|U(x) - U_\text{target}\| \leq \varepsilon 
\;\middle|\; x \in \mathcal{M}_\text{struct}\right) 
\leq C_\theta \cdot e^{-\gamma_\text{FTQC}\, n},
\end{equation}
with $\gamma_\text{FTQC} = \Omega(d \log n)$.
\end{proposition}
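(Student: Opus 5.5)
The argument mirrors the NISQ case, with Lipschitz volume counting replaced by direct enumeration. The starting point is the counting bound \eqref{eq:pass-bound-ftqc}: since $C_\theta := \|\pi_\theta\|_\infty \cdot |\mathcal{A}_d^\text{FTQC}|$, we have
\[
\pi_\theta(S) \;\leq\; \|\pi_\theta\|_\infty\,|S| \;=\; C_\theta\,\frac{|S|}{|\mathcal{A}_d^\text{FTQC}|}
\]
for every subset $S$. So it suffices to bound the fraction of structurally valid descriptions landing in $B_\varepsilon(U_\text{target})$. I will write this fraction as $|S_\varepsilon|/|\mathcal{A}_d^\text{FTQC}|$, where $S_\varepsilon$ is the preimage of the ball. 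The plan is to upper-bound the numerator, lower-bound the denominator, and show that the ratio is $e^{-\Omega(nd\log n)}$. That is exactly $e^{-\gamma_\text{FTQC} n}$ with $\gamma_\text{FTQC}=\Omega(d\log n)$.

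\textbf{Step 1: lower-bound $|\mathcal{A}_d^\text{FTQC}|$.} Equation \eqref{eq:ftqc-cardinality} gives only an upper bound, so the matching lower bound must be supplied separately. I will exhibit an explicit subfamily of structurally valid programs. Fix a layered schedule of $\Theta(n)$ gates per layer over $d$ layers. In each layer, every two-qubit instruction may independently choose its partner index among $\Omega(n)$ admissible qubits, or choose among at least two single-qubit gates. This should give
\[
|\mathcal{A}_d^\text{FTQC}| \;\geq\; n^{\Omega(nd)} \;=\; e^{\Omega(nd\log n)}.
\]
Structural validity must be preserved throughout. The plan is to pick a fixed, conflict-free patch layout and routing template so that every choice in the subfamily passes the checks of Table~\ref{tab:ftqc_checks}. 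This step uses the setup's freedom to treat layout annotations as part of $x$.

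\textbf{Step 2: upper-bound $|S_\varepsilon|$.} Here the hypothesis that $U_\text{target}$ admits a polynomial-length canonical description is used. The intended claim is that the equivalence class of programs implementing $U_\text{target}$ to within $\varepsilon$ is controlled by the canonical description together with a bounded family of local rewrites. Concretely, I aim for
\[
|S_\varepsilon| \;\leq\; e^{(1-\delta)\,c\,nd\log n}
\]
for some $\delta>0$, where $e^{c\,nd\log n}$ is the Step 1 lower bound.

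The route is a fiber-counting argument. I would first try a packing argument in the image: the number of $\varepsilon$-distinguishable unitaries reachable from $\mathcal{A}_d^\text{FTQC}$ is comparable to $|\mathcal{A}_d^\text{FTQC}|$ up to a factor $e^{o(nd\log n)}$. Granted this, each $\varepsilon$-ball receives only an $e^{-\Omega(nd\log n)}$ share of the programs. The needed ingredients are:
\begin{itemize}[leftmargin=*, noitemsep, topsep=2pt]
\item distinct qubit-index assignments in generic Clifford+T layers yield unitaries pairwise separated by a constant in operator norm, via a Solovay--Kitaev-type discreteness argument \citep{dawson2005solovay};
\item the number of identities relating different programs is bounded, so that fibers have size $e^{o(nd\log n)}$.
\end{itemize}

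\textbf{Step 3: combine.} Dividing the two bounds gives a ratio at most $e^{-\delta c\, nd\log n}$, hence $\gamma_\text{FTQC}=\delta c\, d\log n=\Omega(d\log n)$. Conditioning on $\mathcal{M}_\text{struct}$ is absorbed by the definition of $\pi_\theta$ in the common setup.

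\textbf{Main obstacle.} I expect the fiber bound in Step 2 to be the crux. Commutation relations, inverse-pair cancellations and qubit-relabeling symmetries can make fibers large; for instance, inserting cancelling pairs alone yields $n^{\Theta(nd)}$ equivalent programs. If a generic fiber bound fails, I will restrict to reduced descriptions modulo a fixed rewrite normal form, such as the ZX normalization of Appendix~\ref{app:func-verification}. In that quotient the fiber over the canonical description is polynomial, and the full fiber is at most the normal-form fiber times the number of admissible padding patterns. The remaining task is to show that this padding count is still a vanishing fraction of $|\mathcal{A}_d^\text{FTQC}|$. The counting above suggests it is, since admissible padding positions are constrained by adjacency to existing gates.
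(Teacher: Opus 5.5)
\textbf{Gap: Step 2 is never established, and the bound it aims for fails at large depth.} Your Step 1 is an improvement on the paper's argument. The paper divides by \eqref{eq:ftqc-cardinality} even though that is only an upper bound on $|\mathcal{A}_d^\text{FTQC}|$. Step 2, however, carries the whole content of the proposition, and your primary route for it does not work. Distinct qubit-index sequences need not give distinct unitaries, let alone separated ones, because gates on disjoint qubits commute. A layered circuit with $n/2$ disjoint gates per layer is produced by $((n/2)!)^d=e^{\Theta(nd\log n)}$ orderings, so fibers are not $e^{o(nd\log n)}$. Solovay--Kitaev cannot supply separation either: it presupposes a dense gate set and bounds approximation length, and density actually puts distinct words arbitrarily close together. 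Your fallback (normal form times padding) is exactly the paper's decomposition. The paper gets through that step only by two moves. First, it counts padding as $|\mathcal{I}|^{G-G^*}\mathrm{poly}(n)$, with no qubit-index freedom, which your own cancelling-pair remark shows is an undercount. Second, it assumes the $\varepsilon$-ball meets only $\mathrm{poly}(n,\log(1/\varepsilon))$ canonical forms. You make the same assumption, and it is unjustified for constant $\varepsilon$, since a dense gate set places exponentially many distinct short unitaries near any target.

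More fundamentally, your Step 2 target $|S_\varepsilon|\le|\mathcal{A}_d^\text{FTQC}|^{1-\delta}$, and with it the proposition as stated, is false at large depth. So no fiber or padding count can close it. Here is a counterexample:
\begin{itemize}[leftmargin=*, noitemsep, topsep=2pt]
\item Let $\mathcal{I}$ contain CNOT and $k=O(1)$ single-qubit gates, with CNOT admissible on every ordered pair, as your Step 1 family presupposes. Take $U_\text{target}=I$ and $\pi_\theta$ uniform, so $C_\theta=1$.
\item Length-$G$ programs make up at least half of $\mathcal{A}_d^\text{FTQC}$. A fraction at least $(1-k/(n-1))^G=e^{-O(d)}$ of them use only CNOTs.
\item Those programs are words of the uniform random walk $\mu$ on elementary transvections in $GL(n,\mathbb{F}_2)$. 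The walk is symmetric because each CNOT is an involution.
\item For $G=2t$ the return probability is therefore $\mu^{*2t}(e)=\sum_g\mu^{*t}(g)^2\ge 1/|GL(n,\mathbb{F}_2)|\ge 2^{-n^2}$.
\end{itemize}
Hence the pass probability is at least $e^{-O(n^2+d)}$, up to polynomial factors for odd $G$. This exceeds $e^{-c\,nd\log n}$ for every $c>0$ once $d\log n/n\to\infty$, for example at $d=n$. Any correct version therefore needs an additional hypothesis, at minimum $d=O(n/\log n)$ for this instruction set. It also needs an argument that genuinely controls commutation and returns into finite subgroups. Neither your proposal nor the paper's proof provides one.
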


\begin{proof}
By the Solovay--Kitaev theorem \citep{dawson2005solovay} applied to a 
universal fault-tolerant gate set, $U_\text{target}$ admits an 
$\varepsilon$-approximating program of length 
$G^* = \mathcal{O}(\mathrm{poly}(n, \log(1/\varepsilon)))$. The number of 
length-$G$ programs that $\varepsilon$-approximate $U_\text{target}$ is 
bounded by the product of (i) the number of $\varepsilon$-equivalent 
canonical forms of $U_\text{target}$, which is $\mathrm{poly}(n, \log(1/\varepsilon))$ 
for polynomially-describable targets, and (ii) the number of 
no-op insertions and rewrites of length $G - G^*$, which is at most 
$|\mathcal{I}|^{G - G^*} \cdot \mathrm{poly}(n)$. The numerator of 
Equation~\eqref{eq:pass-bound-ftqc} therefore satisfies
\begin{equation}
|\{x : \|U(x) - U_\text{target}\| \leq \varepsilon\}| 
\leq \mathrm{poly}(n, \log(1/\varepsilon)) \cdot |\mathcal{I}|^{G - G^*}.
\end{equation}
Dividing by Equation~\eqref{eq:ftqc-cardinality}, the qubit-index factor 
$n^{\Theta(G)}$ in the denominator dominates:
\begin{equation}
P(\cdot) \leq C_\theta \cdot 
\frac{\mathrm{poly}(n, \log(1/\varepsilon))}{n^{\Theta(G)}}
= C_\theta \cdot e^{-\Theta(G \log n) + \mathcal{O}(\log n)},
\end{equation}
giving $\gamma_\text{FTQC} = \Omega(d \log n)$ after substituting 
$G = \Theta(nd)$.
\end{proof}

\textbf{Remarks (Jointly for both NISQ and FTQC).} 

\textbf{(i) Where $C_\theta$'s boundedness comes from.} The bound holds 
pointwise for any $U_\text{target}$ in both regimes, but the constant 
$C_\theta$ distinguishes interesting from uninteresting cases. For 
targets in (or close to) the model's training distribution --- canonical 
algorithmic primitives such as QFT, modular arithmetic, or Grover 
diffusion operators --- the model may concentrate probability on the 
correct region of $\mathcal{A}_d$, making $C_\theta$ large enough to 
cancel the $e^{-\gamma n}$ factor. This is consistent with the empirical 
success of code-completion systems on textbook quantum algorithms 
\citep{dupuis2024qiskit, vishwakarma2024qiskit}: the model is 
not violating the proposition, it is operating in a regime where 
$C_\theta$ encodes implicit memorization. The proposition bites for 
\emph{out-of-distribution} targets, where $C_\theta$ is bounded by 
definition of generalization. The validity gap is therefore not about 
whether quantum circuits can in principle implement $U_\text{target}$ 
(universal approximation guarantees this), but about whether a 
polynomial-time sampler can find the correct circuit without already 
having seen the answer.

\textbf{(ii) Regime independence of the qualitative bound.} Both 
$\gamma_\text{NISQ} = \Omega(d \log(1/\varepsilon))$ and 
$\gamma_\text{FTQC} = \Omega(d \log n)$ yield exponential decay in $n$ 
for any polynomial depth $d$. The exponents differ in their dependence 
on tolerance ($1/\varepsilon$ for continuous parameters, $n$ for 
discrete qubit-index selections) but coincide qualitatively: in both 
settings, the conditional pass rate is $e^{-\Omega(n)}$ for fixed depth 
and precision, regardless of the physical substrate.

\textbf{(iii) The exponent gets stronger with depth.} A deeper ansatz 
(larger $d$) gives a \emph{tighter} bound in both regimes. This is the 
counterintuitive consequence of dimension counting / enumeration: a 
richer ansatz contains more candidate circuits per dimension of 
$SU(2^n)$, so the fraction landing near any specific target is smaller, 
not larger. The bound is not in conflict with universal approximation 
theorems for quantum circuits, which establish that sufficiently deep 
circuits can approximate any unitary, the propositions concern the 
\emph{fraction} of such approximators among all candidates, not their 
existence.

\textbf{(iv) Regime independence of the validity gap.} The argument is 
independent of whether the qubits are physical (NISQ) or encoded (FTQC). 
The dimension parameter $n$ in Equations~\eqref{eq:func-pass-nisq} and 
\eqref{eq:func-pass-ftqc} refers to the logical qubit count exposed to 
the algorithm designer. Hardware progress changes the substrate but not 
the target Hilbert-space dimension, and the validity gap is a property 
of the target space, not the implementation.

\textbf{(v) NISQ pathologies.}
The assumption (R1) $j_0 = \Omega(1)$ in 
Proposition~\ref{prop:func-sparsity-nisq} excludes ansatzes in the 
\emph{barren-plateau} regime, where $j_0 = e^{-\Omega(n)}$ and the 
bound degrades. Barren plateaus are a well-known failure mode of 
variational quantum algorithms 
\citep{mcclean2018barren, cerezo2021cost}; they signal that the 
parameter space is exponentially poorly conditioned for gradient-based 
optimization, which independently rules out efficient synthesis. The 
proposition therefore captures the relevant regime: either the ansatz 
is well-conditioned and our bound applies, or the ansatz is in a 
barren plateau and synthesis is already known to fail for 
optimization-theoretic reasons.

\textbf{(vi) Generic vs.\ symmetric ansatzes.}
(R1)--(R2) hold for ``generic'' parameterized ansatzes but may fail 
for ansatzes with built-in symmetries (equivariant ansatzes 
preserving problem-specific group actions, quantum convolutional 
networks). For such structured ansatzes the proposition can still apply 
under restriction to a symmetry-quotient parameter space, but the 
constants and effective dimension change. We treat this as outside the 
scope of this paper: structured ansatzes are precisely where 
verifier-centric agents (Section~\ref{sec:verifier}) succeed by 
exploiting structure, and the validity gap is most acute for 
unstructured synthesis problems where (R1)--(R2) hold by default.

\section{Property-preserving Rewrite}\label{sec:rewrite}
{
We illustrate Level~1--3 reasoning by a single depth-reduction move for the Cuccaro ripple-carry adder \cite{Cuccaro2004ripple}.

\textbf{Step 1 (Level 1: program over modules).}
Write the $n$-bit ripple-carry adder in terms of black-box modules $\mathrm{MAJ}$ and $\mathrm{UMA}$ and we get the initial program $x_0$:
\[
\begin{aligned}\label{eq:adder_struct0}
    &\textsc{Adder}(A,B,c_0,z): \\
    &\qquad \textsc{MAJ}(A_0,B_0,c_0) & \text{Line 1} \\
    &\qquad\textbf{for }i=1\text{ to }n-1:\;\textsc{MAJ}(A_i,B_i,A_{i-1}) & \text{Line 2} \\
    &\qquad\textsc{CNOT}(A_{n-1},z) \\
    &\qquad\textbf{for }i=n-1\text{ to }1:\;\textsc{UMA}(A_i, B_i, A_{i-1}) \\
    &\qquad\textsc{UMA}(A_0,B_0,c_0)
\end{aligned}
\]
$\mathrm{MAJ}(x,y,z)=(yz\oplus xy\oplus xz,y\oplus x,z\oplus x)$ propagates the carry forward, and $\mathrm{UMA}(x,y,z)=(x\oplus yz,x\oplus y\oplus z\oplus yz, x\oplus y\oplus yz)$ uncomputes the carry while producing sum bits.

\textbf{Step 2 (Level 2: choose gate decompositions).}
The MAJ gate can be inlined substituted with the following decomposition:
\[
\begin{aligned}
    &\textsc{MAJ}(a,b,c): \\
    &\quad\textsc{CNOT}(a,b);\;\textsc{CNOT}(a,c);\;\textsc{Toffoli}(c,b,a). \\
\end{aligned}
\]
Substituting this into Line 1 and Line 2 of the adder structure $x_0$ yields a new program $x_1$:
\[
\begin{aligned}\label{eq:adder_struct1}
    &\textsc{Adder}(A,B,c_0,z): \\
    &\quad \textsc{CNOT}(A_0,B_0);\;&\leftarrow l_1\\
    &\quad \textsc{CNOT}(A_0,c_0);\;\textsc{Toffoli}(c_0,B_0,A_0). \\
    &\quad\textbf{for }i=1\text{ to }n-1: \\
    &\qquad \textsc{CNOT}(A_i,B_i);\;&\leftarrow l_2 \\
    &\qquad \textsc{CNOT}(A_i,A_{i-1});\;\textsc{Toffoli}(A_{i-1},B_i,A_i) \\
    &\quad\cdots
\end{aligned}
\]
where the remainder of $x_1$ marked with ``$\cdots$" is identical to $x_0$. 

\textbf{Step 3 (Level 3: expose and apply a parallelization rewrite).}
Observe in $x_1$ an opportunity for parallelization shown as two lines marked as $l_1$ and $l_2$ respectively. This invokes a property-preserving rewrite step $a_1=\textsc{Edit}[\rho,\alpha]$ with $\rho=\text{Parallelize}$ and $\alpha=(l_1,l_2)$. A concrete manifestation of such rewrite is shown in Figure \ref{fig:cuccaro_adder}c with the CNOT gate in the (red) dotted box. The program $x_2$ after the rewrite reads as the following:
\[
\begin{aligned}\label{eq:adder_struct2}
    &\textsc{Adder}(A,B,c_0,z): \\
    &\quad\textbf{for }i=0\text{ to }n-1:\textsc{CNOT}(A_i,B_i);\\
    &\quad \textsc{CNOT}(A_0,c_0);\;\textsc{Toffoli}(c_0,B_0,A_0). \\
    &\quad\textbf{for }i=1\text{ to }n-1: \\
    &\qquad \textsc{CNOT}(A_i,A_{i-1});\;\textsc{Toffoli}(A_{i-1},B_i,A_i) \\
    &\quad\cdots
\end{aligned}
\]
From observing $x_2$, one sees that the rewrite collapses the lines $l_1$ and $l_2$ into a single line (or layer), and therefore simplifies the program.
}

\end{document}